\DeclareMathAlphabet{\mathcalligra}{T1}{calligra}{m}{n}
\newtheorem{example}{Example}
\newtheorem{theorem}{Theorem}
\newtheorem{definition}{Definition}
\newtheorem{corollary}{Corollary}
\newtheorem{proposition}{Proposition}
\newtheorem{lemma}{Lemma}
\newtheorem{remark}{Remark}
\title{Relational Action Bases:\\ Formalization, Effective Safety Verification, and Invariants \\ (Extended Version)}
\author{
Silvio Ghilardi$^1$
\and
Alessandro Gianola$^2$\and
Marco Montali$^2$\And
Andrey Rivkin$^2$
\affiliations
$^1$Universit\`a degli Studi di Milano\\
$^2$Free University of Bozen-Bolzano
\emails
silvio.ghilardi@unimi.it,
\{gianola,montali,rivkin\}@inf.unibz.it
}
\newcommand{\ua}{\ensuremath{\underline a}}
\newcommand{\ud}{\ensuremath{\underline d}}
\newcommand{\ue}{\ensuremath{\underline e}}
\newcommand{\ui}{\ensuremath{\underline i}}
\newcommand{\ux}{\ensuremath{\underline x}}
\newcommand{\uy}{\ensuremath{\underline y}}
\newcommand{\uz}{\ensuremath{\underline z}}
\newcommand{\cSi}{\ensuremath \mathcal S}
\newcommand{\DB}{\mathsf{SDB}}
\newcommand{\artDB}{\mathsf{SDB}}
\newcommand{\LRA}{\ensuremath{\mathcal{LRA}}\xspace}
\newcommand{\LIA}{\ensuremath{\mathcal{LIA}}\xspace}
\newcommand{\cA}{\ensuremath \mathcal A}
\newcommand{\cC}{\ensuremath \mathcal C}
\newcommand{\cB}{\ensuremath \mathcal B}
\newcommand{\cM}{\ensuremath \mathcal M}
\newcommand{\cS}{\ensuremath \mathcal S}
\newcommand{\safe}{\texttt{SAFE}\xspace}
\newcommand{\unsafe}{\texttt{UNSAFE}\xspace}
\newcommand{\mcmt}{\textsc{mcmt}\xspace}
\renewcommand{\int}{\ensuremath {\mathcal I}}
\newcommand{\EUF}{\ensuremath{\mathcal{EUF}}}
\newcommand{\ras}{RAS\xspace}
\newcommand{\uras}{RAB\xspace}
\newcommand{\univtrans}{\mathcalligra{t}\mathcalligra{r}\xspace}
\newcommand{\type}[1]{\ensuremath{\mathsf{#1}}\xspace}
\newcommand{\funct}[1]{\ensuremath{\mathit{#1}}\xspace}
\newcommand{\constant}[1]{\texttt{#1}}
\newcommand{\Sorts}[1]{#1_{\mathit{srt}}}
\newcommand{\functs}[1]{#1_{\mathit{fun}}}
\newcommand{\vals}[1]{#1_{\mathit{val}}}
\newcommand{\ids}[1]{#1_{\mathit{ids}}}
\newcommand{\ext}[1]{#1_{\mathit{ext}}}
\newcommand{\nullv}{\type{undef}}
\definecolor{mgreen}{rgb}{0.0, 0.5, 0.0}
\definecolor{deepblue}{HTML}{0C3B80}
\definecolor{deepgreen}{HTML}{2EA601}
\definecolor{lightOrange}{HTML}{FFA03C}
\definecolor{darkOrange}{HTML}{F1800A}
\definecolor{lightBlue}{HTML}{0174CD}
\definecolor{greenF}{HTML}{2CBB5C}
\definecolor{cyan}{HTML}{86A6D5}
\tikzstyle{sortnode} = [
\tikzstyle{functnode} = [
\tikzstyle{idnode} = [
\tikzstyle{valnode} = [
\tikzstyle{f} = [
\tikzstyle{fd} = [
\tikzstyle{relation}=[rectangle split, rectangle split parts=#1, rectangle split part align=base, draw, anchor=center, align=center, text height=3mm, font=\bfseries, ultra thick, text centered]
\newcommand{\smtlambda}[2]{
  \lambda #1.\left(#2\right)
}
\newcommand{\smtif}[4]{
  \begin{array}[#1]{@{}l@{}}
  \mathsf{if~}#2\mathsf{~then~}#3
  \\\mathsf{else~}#4
  \end{array}
}
\newcommand{\smtifinline}[3]{
  \mathsf{if~}#1\mathsf{~then~}#2\mathsf{~else~}#3
}
\newcommand{\typedvar}[2]{#1{:}#2}
\newcommand{\artvar}[1]{\mathit{#1}}
\newcommand{\vsig}{\Sigma_{\mathit{v}}}
\newcommand{\vras}{\S_{\mathit{va}}}
\newcommand{\nin}{\type{NIN}}
\newcommand{\stringval}{\type{String}}
\newcommand{\intval}{\type{Int}}
\newcommand{\pstatev}{\artvar{pState}}
\newcommand{\visastat}{\artvar{visaStat}}
\newcommand{\tonotify}{\artvar{toNotify}}
\newcommand{\enab}{\constant{enabled}}
\newcommand{\evalu}{\constant{evaluated}}
\newcommand{\notified}{\constant{notified}}
\newcommand{\appidx}{\type{appIndex}}
\newcommand{\appuser}{\funct{applicant}}
\newcommand{\appscore}{\funct{appScore}}
\newcommand{\appres}{\funct{appResult}}
\newcommand{\appvisa}{\funct{vType}}
\tikzstyle{sortnode} = [
\tikzstyle{functnode} = [
\tikzstyle{idnode} = [
\tikzstyle{artnode} = [
\tikzstyle{valnode} = [
\tikzstyle{f} = [
\tikzstyle{fd} = [
\tikzstyle{relation}=[rectangle split, rectangle split parts=#1, rectangle split part align=base, draw, anchor=center, align=center, text height=3mm, font=\bfseries, text centered]
\tikzstyle{outflow} = [sequence,->,densely dotted]
\tikzstyle{smalltask} = [rectangle,draw,rounded corners=5pt,minimum height=2.5em,minimum width=3em]
\tikzstyle{block} = [task,densely dotted]
\tikzstyle{smallblock} = [smalltask,densely dotted]
\tikzstyle{legend} = [font=\footnotesize]
\tikzstyle{state} = [
\definecolor{deepblue}{HTML}{0C3B80}
\definecolor{deepgreen}{HTML}{2EA601}
\definecolor{lightOrange}{HTML}{FFA03C}
\definecolor{darkOrange}{HTML}{F1800A}
\definecolor{lightBlue}{HTML}{0174CD}
\definecolor{greenF}{HTML}{2CBB5C}
\definecolor{cyan}{HTML}{86A6D5}
\tikzstyle{task} = [
\tikzstyle{sequence} = [
\tikzstyle{guard} = [
\tikzstyle{lbl} = [text width=4cm]
\newcommand{\B}{\ensuremath{\mathcal{B}}}
\renewcommand{\S}{\ensuremath{\mathcal{S}}}
\newcommand{\T}{\ensuremath{T}}
\newcommand{\ApproxB}{\tilde{B}}
\newcommand{\ApproxPre}{\tilde{\mathit{Pre}}}
\newcommand{\InstPre}{\mathit{InstPre}}
\newcommand{\tup}[1]{\langle #1\rangle}            
\begin{document}

\maketitle

\begin{abstract}
Modeling and verification of dynamic systems operating over a relational representation of states are increasingly investigated problems in AI, Business Process Management, and Database Theory. To make these systems amenable to verification, the amount of information stored in each relational state needs to be bounded, or restrictions are imposed on the preconditions and effects of actions. We introduce the general framework of relational action bases (RABs), which generalizes existing models by lifting both these restrictions: unbounded relational states can be evolved through actions that can quantify both existentially and universally over the data, and that can exploit numerical datatypes with arithmetic predicates. We then study parameterized safety of RABs via (approximated) SMT-based backward search, singling out essential meta-properties of the resulting procedure, and showing how it can be realized by an off-the-shelf combination of existing verification modules of the state-of-the-art MCMT model checker. We demonstrate the effectiveness of this approach on a benchmark of data-aware business processes. Finally, we show how universal invariants can be exploited to make this procedure fully correct.

\end{abstract}


\section{Introduction}

Reasoning about actions and processes has lately witnessed an important shift in the representation of states and the way actions query and progress them. 
Relational representations and query/update languages are often used in these systems as a convenient syntactic sugar to compactly represent and evolve propositional states; this is, for example, what happens in STRIPS for planning.  
More recently, instead, several works have put emphasis on states captured by full-fledged relational structures, equipping dynamic systems with actions that can create and destroy objects and relations. This \emph{data-awareness} is essential to capture relevant systems in AI \cite{BarD15} and business process management \cite{Vian09,CaDM13}. Notable examples are: 
\begin{inparaenum}[\itshape (i)]
\item Situation Calculus- \cite{DeLP16} and knowledge-based \cite{HaririCMGMF13} action theories, 
\item relational MDPs \cite{Yang-ML2022},
\item database-enriched business processes \cite{BCDDM13,DeLV16,BPM19}, and \item Petri nets with identifiers \cite{PolyvyanyyWOB19,IS22} and with advanced interaction mechanisms \cite{Fahland19}.
\end{inparaenum}
The subtle interplay of the data and process dimensions calls for suitable techniques for verifying at design-time whether such \emph{relational dynamic systems} behave as  expected \cite{BarD15,CaDM13}.
However, verification is extremely difficult in this setting, as a relational dynamic system induces, in general, an infinite state-space. This poses a twofold challenge for verification procedures: \emph{foundationally}, the identification of interesting classes of systems for which suitable verification procedures enjoy key (meta-)properties (such as soundness, completeness, correctness, and termination); \emph{practically}, the development of corresponding effective verification tools. 

To attack these two challenges, two main lines of research have been pursued. In the first, a bound is imposed on the number of objects that can be stored in a single state \cite{HaririCD14}. This makes verification of first-order $\mu$-calculus \cite{CDMP18} and for a fragment of first-order LTL \cite{CalvaneseGMP22} reducidble to conventional finite-state model checking, for systems with a fixed initial state and objects only equipped with equality comparison. The reductions are not helpful to understand how to practically approach verification from the algorithmic point of view, and only preliminary results exist in connection to practical verifiers \cite{Yang-ML2022}. 
In the second line of research, unbounded relational dynamic systems are studied, where unboundedness refers to two distinct aspects. On the one hand, the working memory of the system can contain unboundedly many objects per state; on the other hand, verification is studied parametrically to read-only data, to ensure that desired properties hold no matter how such unmodifiable data are fixed. This setting, surveyed in \cite{Vian09}, is more delicate than the one of state-bounded systems: the identification of verifiable fragments calls for a very careful analysis of which modelling features are supported to specify actions. At the same time, perhaps surprisingly, practical verifiers exist, based on explicit-state model checking and vector addition systems \cite{verifas}, or on symbolic model checking via SMT \cite{BPM19} implemented in a dedicated module of the MCMT model checker \cite{mcmt}. The latter approach covers the most general modeling framework in this spectrum, called RAS \cite{MSCS20}, which extends the framework in \cite{DeLV16} while restricting verification to (parameterized) \emph{safety} properties, to check that for every possible instantiations of the read-only data, the RAS never reaches undesired states. In MCMT, this is realized through a backward reachability procedure that iteratively computes preimages of unsafe states, employing dedicated quantifier elimination techniques for data variables \cite{MSCS20,JAR21}.

A RAS comes with 
\begin{inparaenum}[\it (i)]
\item a read-only database with primary and foreign keys, 
\item a working memory consisting of read-write unbounded relations, and \item actions that captures several forms of constrained updates, whose preconditions query the working and read-only memory with existential queries, and whose effects consist in update formulae that can express addition, deletion, or bulk update of tuples in the read-write relations.
\end{inparaenum}
However, two important features are missing: 
\begin{inparaenum}[\it (i)]
\item numerical datatypes with arithmetic conditions (limited forms are only studied in \cite{DeLV16}, without further implementation in \cite{verifas}), and
\item queries with universal quantification. 
\end{inparaenum}

The first contribution of this work is to generalize RASs in the new framework of \emph{relational action bases (RABs)}, which lifts both these restrictions: RABs can evolve unbounded relational states can through an actions that quantify both existentially and universally over the data, and that can exploit different numerical datatypes with arithmetic predicates (including linear arithmetics for integers and reals). As substantiated in Section~\ref{sec:usefulness}, this is essential to capture fundamental modelling features arising in different application domains, and that where out of reach so far.

On top of this framework, we provide a threefold technical contribution. First and foremost, we study how to lift the SMT-based backward search employed by MCMT tho the richer setting of RABs.  This requires to introduce two (over-)approximation steps in the symbolic computation of unsafe states, which could cause the procedure to detect unsafety spuriously. This calls for a detailed investigation of the meta-properties of the procedure. Since the setting of RABs is highly undecidable, we concentrate on soundness, completeness, and correctness of the procedure; decidable cases for which termination is also guaranteed are inherited from the RAS fragments identified in \cite{MSCS20}. Importantly, this investigation has a direct, practical impact: it shows that existing separate modules of MCMT model checker, respectively implementing \emph{cover} computation to handle arithmetics \cite{JAR21,JAR22}, and dynamic forms of \emph{instantiation} \cite{jsat,shahom} to handle universally quantified variables, can be gracefully combined with a third MCMT module dedicated to quantifier elimination for data variables. This in turn witnesses that such modules can be effectively combined in MCMT to verify RABs off-the-shelf, without destroying the good meta-properties of the combined procedure.

As a second technical contribution, we then put MCMT at work, showing that it terminates returning the correct answer with a very effective performance, on a non-trivial set of 27 different RABs constructed from a benchmark of data-aware business processes \cite{verifas}.

The last technical contribution is on the injection on invariants in the safety analysis of RABs. Taking inspiration from \cite{bjorner,bjorner17}, we show, for the first time in the context of backward reachability for this class of dynamic systems, that spurious results for unsafety are incompatible with the existence of universal invariants. 

\section{The Need of Arithmetics and Universals}\label{sec:usefulness}
We briefly motivate here why arithmetics and universally quantified variables are essential when modeling relational dynamic systems in different application domains. As the need for numerical variables and arithmetics is well-known (see, e.g., \cite{GereviniSS08,Belardinelli14,DeLV16,FelliMW22}), we concentrate on universal quantification. 

\smallskip
\noindent
\textbf{Injection of fresh objects.} Relational dynamic systems need to create new, fresh objects during the execution. Classical examples are the creation of a new primary key within a database \cite{BCDDM13,BeLP14}, or of a new identifier in a (high-level) Petri net \cite{PolyvyanyyWOB19,IS22}. Without universal quantification, freshness cannot be guaranteed: it is possible to nondeterministically pick an object, but not to enforce that it is not contained in the working memory. 

\smallskip
\noindent
\textbf{Conditional updates and constraint checking.}
Universally quantified variables are essential to express rich conditional updates operating at once over the entire extension of a relation \cite{BCDDM13,BeLP14}. Conditional updates can also be used to enforce constraints on the working memory of the system, using the following modelling pattern. Considering the forms of actions supported by RABs, the constraint of interest can be formulated as a universally quantified sentence. The system alternates an action mode and a check mode. In the check mode, the system verifies whether the constraint holds. If so, the system goes back to the action mode. If not (i.e., the negation of the constraint holds), the system enters into an error state. Universally quantified sentences can express, notably, key and disjointness constraints. 

\smallskip
\noindent
\textbf{Universal synchronization semantics.}
Dynamic systems where multiple entities progress concurrently typically call for synchronization mechanisms. Two key examples are (parameterized) multiagent systems where \emph{all} agents synchronously perform a joint action \cite{KouvarosAIJ16,PRIMA2020}, and the equality synchronization semantic of proclets \cite{Fahland19}, where a transition can be performed by a parent object only if \emph{all} its child object are in a certain state (e.g., all items of the same order have been validated). Thanks to universally quantified variables, RABs support the such synchronization mechanisms. This is especially remarkable considering that, in particular, verification of the proclet model was out of reach so far \cite{Fahland19,IS22}).

\section{Preliminaries}
\label{sec:prelim}

We adopt the usual first-order syntactic notions of signature, term,
atom, (ground) formula,  and the like. In general, signatures 
 are multi-sorted, and every sort comes with equality. For simplicity, most definitions 
 will be given for single-sorted languages; the adaptation to  multi-sorted languages is straightforward.
We compactly represent a tuple $\tup{x_1,\ldots,x_n}$ of variables as $\ux$. Notation $t(\ux)$ (resp.,~$\phi(\ux)$) means that term $t$ (resp.,~formula $\phi$) has free variables included in the tuple $\ux$. 
 We always assume that terms and formulae are well-typed. 
A formula is said to be \emph{universal} (resp., \emph{existential}) if it has the form $\forall \ux (\phi(\ux))$ (resp., $\exists \ux (\phi(\ux))$), where $\phi$ is a quantifier-free formula. A \emph{sentence} is a formula without free variables.

For semantics, we use the standard notions of  $\Sigma$-structure $\cM$ and of truth of a formula in a $\Sigma$-structure under a free variables assignment. 
A \emph{$\Sigma$-theory} $T$ is a set of $\Sigma$-sentences; a \emph{model}  of $T$ is a $\Sigma$-structure $\cM$ where all sentences in $T$ are true.
	We use 
	$T\models \phi$, indicating that $\phi$ {\it is a logical consequence of} $T$, to express that $\phi$ is true in all models of $T$ for every assignment to the variables occurring free in $\phi$.
We say that $\phi$ is \emph{$T$-satisfiable} iff there exists a model $\cM$ of $T$ and an assignment to the free variables of $\phi$ that makes $\phi$ true in $\cM$: if $\phi$ is quantifier-free, the problem of establishing for $\phi$ the existence of such a model and such an assignment is called \emph{SMT problem} for $T$. Examples of theories from the SMT literature are $\EUF$, the theory of equality with uninterpreted symbols, and $\LIA$/$\LRA$, the theory of linear integer/real arithmetics (see, e.g., \cite{smt-lib} for details).
	 A $T$-cover of a formula $\exists \ux \phi(\ux,\uy)$ is the strongest quantifier-free formula $\psi(\uy)$ that is implied by $\exists \ux \phi(\ux,\uy)$ modulo $T$, i.e., it is implied by $\exists \ux \phi(\ux,\uy)$ and implies all the other implied formulae $\phi'(\uz,\uy)$ modulo $T$. A theory $T$ has \emph{quantifier-free uniform interpolation} iff all formulae $\exists \ux \phi(\ux,\uy)$ have a $T$-cover, and an effective procedure for computing them is available. Computing $T$-covers is strictly related to the problem of eliminating quantifiers in suitable theory extensions of $T$ \cite{JAR21}. 

	

 

We define now 
case-defined functions, abbreviating more complicated (still first-order)
expressions.  Fix 
$\Sigma$-theory $T$; a
\emph{$T$-partition} is a finite set $\kappa_1(\ux), \dots, \kappa_n(\ux)$ of quantifier-free formulae
 s. t. $T\models \forall \ux \bigvee_{i=1}^n \kappa_i(\ux)$ and
$T\models \bigwedge_{i\not=j}\forall \ux \neg (\kappa_i(\ux)\wedge
\kappa_j(\ux))$.  Given such a $T$-partition
$\kappa_1(\ux), \dots, \kappa_n(\ux)$ together with $\Sigma$-terms
$t_1(\ux), \dots, t_n(\ux)$ (all of the same target sort), a
\emph{case-definable extension} is the $\Sigma'$-theory $T'$ where
$\Sigma'=\Sigma\cup\{F\}$, with $F$ a ``fresh'' function symbol (i.e.,
$F\not\in\Sigma$) 
, and
$T'=T \cup\bigcup_{i=1}^n \{\forall\ux\; (\kappa_i(\ux) \to F(\ux) =
t_i(\ux))\}$.
%
%
Intuitively, $F$ represents a case-defined function, representable
using nested if-then-else expressions as:
$
F(\ux) ~:=~ \mathtt{case~of}~
\{\kappa_1(\ux):t_1;\cdots;\kappa_n(\ux):t_n\}.
$
We identify $T$ with any of its case-definable
extensions $T'$.  In fact, 
given a $\Sigma'$-formula
$\phi'$, one can easily find a $\Sigma$-formula $\phi$ that is equivalent to $\phi'$ in all models
of $T'$. 

Again for compactness, we also use $\lambda$-abstractions. 
We always abbreviate formulae of the form $\forall y.~b(y)=F(y,\uz)$
(where, typically, $F$ is a symbol introduced in a case-defined extension) into $b = \lambda y. F(y,\uz)$. Hence, also these $lambda$-abstractions can be converted back into plain first-order formulae.


\section{Relational Action Bases}
\label{sec:model}
We are now ready to introduce the general formal model of \emph{relational action bases} (RABs). To do so, we follow the widely adopted framework of so-called artifact systems \cite{Hull08}, in their most general form structured in three components \cite{DeLV16,MSCS20}:
\begin{inparaenum}[\itshape (i)]
\item a read-only relational database (DB) with primary and foreign keys, to store background, static information;
\item a mutable working memory consisting of a set of evolving relations;
\item a set of guarded transitions that inspects the DB and the working memory and updates the latter.
\end{inparaenum}
RABs actually take the RAS model of \cite{MSCS20} (which supports the most expressive forms of guarded transitions), extending it with two essential features:
full-fledged arithmetic theories, and universal quantification in the guards and effects of transitions. 


\subsection{Static DB schemas}
\label{sec:readonly}
Static DB schemas define read-only relations with primary and foreign key constraints, which host pure identifiers subject to $\EUF$, or integer/real data attributes subject to arithmetic constraints expressed in $\LIA$/$\LRA$. 

\begin{definition} \label{def:extdb}
A \emph{static DB schema} ($\DB$ schema for short) is a pair $DB:=\tup{\Sigma^{DB}\cup\Sigma^{ar},T^{DB}\cup T^{ar}}$, where:
  \begin{compactitem}
  \item $\Sigma^{DB}$, called \emph{$\artDB$ signature},
   is a finite multi-sorted
    signature (where sorts are partitioned into \emph{id} and \emph{value} sorts $\Sorts{\Sigma^{DB}} = \ids{\Sigma^{DB}}\uplus\vals{\Sigma^{DB}}$), whose symbols are equality, unary functions, $n$-ary relations and constants;
  \item $T^{DB}$, called \emph{$\artDB$ theory}, is $\EUF(\Sigma^{DB})\cup \{  \forall x~(x = \nullv \leftrightarrow f(x) = \nullv) \}$, for every function $f$ in $\Sigma^{DB}$;
 \item $\Sigma^{ar}$, called \emph{arithmetic signature}, is the signature of $\LRA$ or of $\LIA$; 
  \item $T^{ar}$, called \emph{arithmetic theory}, is $\LRA$ or $\LIA$.
  \item $\vals{\Sigma^{DB}}\cup\Sorts{\Sigma^{ar}}$ can only be the codomain sort of a symbol from $\Sigma^{DB}$ other than an equality predicate;
  \end{compactitem}
     We respectively call $\Sigma:=\Sigma^{DB}\cup\Sigma^{ar}$ and $T:=T^{DB}\cup T^{ar}$ the \emph{full signature}  and the \emph{full theory}  of 
     $DB$.
\end{definition}
Although unconventional, the definition employs a functional approach for modeling DB schemas that can be directly processed by our technical machinery (see Section~\ref{sec:safety-uras}), 
and at the same time captures the most sophisticated read-only DB schemas considered in the literature \cite{DeLV19,verifas,BPM19}. Unary functions are used to capture relations with primary and foreign keys. Specifically, the domain of a unary function is an \emph{id sort}, representing object identifiers for that sort. Functions sharing the same id sort as domain are used to model the attributes of such objects, which can either point to other id sorts (implicitly representing foreign keys), or to so-called \emph{value sorts} that denote primitive datatypes. While in previous works only uninterpreted value sorts could be employed, here we also support real/integer datatypes, subject to arithmetic theories that considerably increases the modeling power of the language.

In the $\DB$ schema, we use default, undefined objects/values to model NULL-like constants in the working memory. To this end, we introduce a special $\nullv$ constant and explicitly define the axiom $\forall x~(x = \nullv \leftrightarrow f(x) = \nullv)$ to consistently indicate that application of a function to an undefined object returns an undefined value/object and that this is the only case for which the function is undefined.

\begin{definition} 
  \label{def:instance}
  An \emph{$\DB$ instance} of $\DB$ schema $DB:=(\Sigma,T)$ (where $\Sigma:=\Sigma^{DB}\cup\Sigma^{ar}$) is a $\Sigma$-structure
  $\cM$ that is a model of $T$ and such that every id sort of $\Sigma^{DB}$ is
  interpreted in $\cM$ on a \emph{finite} set.  
  \end{definition}
There is a key difference between $\DB$ instances and arbitrary \emph{models} of $T^{DB}\cup T^{ar}$: finiteness of id sorts 
and of the non-id values that can be pointed from id sorts using functions. This is customary for relational DBs \cite{FoundationsOfDB}.  
%
As shown in \cite{MSCS20}, $T^{DB}$ has the finite model property for constraint satisfiability, hence the standard SMT problem 
can be equivalently reformulated by asking for the existence of an $\DB$ instance instead of a generic model of $T^{DB}$.



\begin{example}
  \label{ex:db-v}
  Consider a visa application center, with a read-only DB that stores information 
  critical to the visa application process, including personal data of citizens and visa types. 
  We formalize this in a DB signature $\vsig$ with:
  \begin{inparaenum}[\itshape (i)]
  \item one id sort to identify citizens;
  \item two value sorts STRING and INT, used, e.g., for country visa types and for giving scores to visa applications under review.
  \end{inparaenum}
  %
\end{example}



\subsection{\uras Transitions}
\label{sec:transitions}

The working memory of an \uras consists of \textit{individual} and
\textit{function} variables. Function variables model evolving relations, in the style of \cite{DeLV16,verifas}, while individual variables play a twofold role: they are used to store global information about the control state of the process, as well as to load and manipulate (components of) tuples from the static relations.


Given an $\DB$ schema $\tup{\Sigma,T}$, a \emph{(working) memory extension} of $\Sigma$ is a
signature $\ext{\Sigma}$ obtained from $\Sigma$ by adding to it some extra sort
symbols together with corresponding equality predicates. 
These sorts (indicated with $E, E_1, E_2 \dots$) 
are called \emph{memory sorts}, 
whereas the ``old'' sorts from $\Sigma$ are called \emph{basic}  (variables of basic sorts are called `basic sort' variables). 
This is done to model the mutable working memory similarly to the static DB: 
implicit identifiers of working memory tuples form working memory relations, whereas data values of such tuples have basic sorts.


A \emph{memory schema} is a pair $(\ux,\ua)$ of individual and unary function variables. 
Variables in $\ux$ are called 
 \emph{memory variables}, and 
 the ones in $\ua$ \emph{memory
components}. 
The latter are required to have a memory sort as source sort and a basic sort as target sort.
Given an $\DB$ instance $\cM$ of $\ext{\Sigma}$, an \emph{assignment} to 
a memory schema $(\ux, \ua)$ over $\ext{\Sigma}$ is a map $\alpha$ assigning
to every 
 $x_i\in \ux$ of sort $S_i$ an element
$x^\alpha\in S_i^\cM$ and to every 
$a_j: E_j\longrightarrow U_j$ (with $a_j\in \ua$) a 
function
$a_j^\alpha: E_j^\cM\longrightarrow U_j^\cM$. The notion of assignment formally captures the current configuration of the working memory.

An assignment to $(\ux, \ua)$ can be seen as an $\DB$ instance
\emph{extending} the $\DB$ instance $\cM$. 
Assuming that $\ua$ contains
$a_{i_1}: E\longrightarrow S_1, \cdots, a_{i_n}:E\longrightarrow S_n$, 
the memory relation $E$ in the assignment $(\cM,\alpha)$ 
is the set $\{\tup{e, a_{i_1}^\alpha(e), \dots, a_{i_n}^\alpha(e)} \mid e\in E^{\cM} \}$.
Thus each tuple of $E$ is formed by an implicit unique ``identifier'' $e\in E^\cM$ (called \emph{index})
and by ``data'' $\ua_i^\alpha(e)$ taken from the
static DB $\cM$.  
When the system evolves, the set $E^\cM$ remains fixed, 
whereas the components $\ua_i^\alpha(e)$ may get repeatedly updated. 
``Removing'' a tuple from $E$ results in $e$ being reset to $\nullv$. This clarifies the relational nature of the working memory.

Given a memory schema $(\ux,\ua)$ over $\ext{\Sigma}$, where $\ux=x_1,\dots, x_n$ and $\ua=a_1,\dots,a_m$, 
we next list the kind of formulae that can be used in \uras{s}. 

The first kind is an \emph{initial formula}, of the form
   $\iota(\ux,\ua):= (\bigwedge_{i=1}^n x_i= c_i) \land
    (\bigwedge_{j=1}^m a_j =\lambda y. d_j)$,
where $c_i$, $d_j$ are constants from $\Sigma$ (typically, $c_i$ and $d_j$ are initially set to \nullv). 

The second kind is a \emph{state formula}, of the form $\exists \ue\, \phi(\ue, \ux,\ua)$, where $\phi$ is quantifier-free and $\ue$ are  individual variables of artifact sorts (also called \emph{`index' variables}).

The third kind is a \emph{transition formula}, of the form
   $$\univtrans(\ue,  \ud, \ux,\ua):=
\exists \ue,\ud \,\left(
      \begin{array}{@{}l@{}l@{}}
      \gamma(\ue,\ud,\ux,\ua) {}\land (\forall k \; \gamma_u(k, \ue, \ud, \ux, \ua)) \\
       {}\land\bigwedge_i x'_i= F_i(\ue, \ud, \ux,\ua) \\
       {}\land \bigwedge_j a'_j=\lambda y. G_j(y,\ue,  \ud, \ux,\ua)
      \end{array}
    \right)$$ 
  \noindent where the $\ue$ and $\ud$ are `index' and `basic sort' individual variables resp., $k$ is an individual variable of artifact sort, $\gamma$ (the ``(plain) guard'') and $\gamma_u$ (the ``universal guard'') are quantifier-free, $\ux'$ and $\ua'$ are renamed
  copies of $\ux$ and $\ua$, and the $F_i$, $G_j$ (the ``updates'') are case-defined
  functions. 
  The existentially quantified ``data'' variables $\ud$ (i.e., of basic sort) are essential as they allow to express existential queries over the $\DB$ schema, to retrieve data elements from it and also to represent  (non-deterministic) external user inputs.
  
Transition formulae as above can model such operations over tuples as
\begin{inparaenum}[\itshape (i)]
\item insertion (with/without duplicates) in a memory relation,
\item removal from a memory relation,
\item transfer from a memory relation to memory variables (and
  vice-versa), and
\item bulk removal/update of a memory relation, based on a condition expressed on such relation.
\end{inparaenum}
These operations can all be formalized as \uras{s} transitions:  modeling patterns using this approach have been shown in \cite{BPM19}.

We are now ready to formally defined \uras{s}.



\begin{definition}\label{def:uras}
  A \emph{relational action base (\uras)} is a tuple
  \[
    \cS ~=~\tup{\tup{\Sigma,T},\ext{\Sigma}, \ux, \ua, \iota(\ux,\ua),
     \tau(\ux,\ua,\ux',\ua')}
  \]
  where:
  \begin{inparaenum}[\it (i)]
  \item $DB$ is an $\DB$ schema,
  \item $\ext{\Sigma}$ is a memory extension of $\Sigma$,
  \item $(\ux, \ua)$ is a memory schema over $\ext{\Sigma}$,
  \item $\iota$ is an initial formula, and
  \item $\tau$ is a disjunction of transition formulae $\univtrans$.
  \end{inparaenum}
\end{definition}
Since $\tau$ is a disjunction of transition formulae, it symbolically represents the union of all system transitions. 
Such transitions are used to establish interaction between the static DB (with $\DB$ schema $DB$) and the working memory (with memory schema $(\ux, \ua)$).



\begin{example}\label{ex:v-short}
Let us now present a \uras $\vras$ capturing a visa application process used by the visa application center. 
Every second week all the applications get evaluated and applicants get informed about the visa decisions.
 
 $\vras$ works over the $\DB$ schema discussed in Example~\ref{ex:db-v}. 
 The working memory of $\vras$ consists of:    
 \begin{inparaenum}[\itshape (i)]
	\item a variable $\pstatev$ captures the main phases of the process;
 	\item a variable $\visastat$ stores the visa status;
 	\item a variable $\tonotify$ stores a citizen to be notified about the approved application;
 	\item a multi-instance artifact for managing visa applications.
 \end{inparaenum}
 The latter is formalized by enriching DB signature $\vsig$ with a memory sort $\appidx$ (for ``internally'' identifying the applications), and by adding a memory schema containing the following information: 
  the applicant's national identification number $\appuser :\appidx  \longrightarrow \nin$,  visa type $\appvisa  : \appidx  \longrightarrow \stringval$, evaluation score $\appscore  : \appidx  \longrightarrow \intval$, and application results $\appres:\appidx\longrightarrow \stringval$. 
 
We now showcase a few transitions for inserting and evaluating visa applications. 
 We assume that if a memory variable/component is not mentioned in a transition formula, then its values remain unchanged. 
   To insert an application into the system, the application process has to be enabled.  
  The corresponding update simultaneously 
  \begin{inparaenum}[\itshape (i)]
  \item selects the applicant's identification number and visa type and inserts them into the memory components $\appuser$ and $\appvisa$,
  \item evaluates the visa application and inserts a non-negative score into the memory component $\appscore$.
   \end{inparaenum}
   Since memory tuples must have implicit identifiers, the above insertion requires a``free'' index (i.e., an index pointing to an undefined applicant) to be selected. This is formalized as follows:
$$
    \footnotesize
    \begin{array}{@{}l@{}}
        \exists \typedvar{i}{\appidx},\exists \typedvar{a}{\nin}, \typedvar{v}{\stringval},\typedvar{s}{\intval}\\
      \left(
        \begin{array}{@{}l@{}}
          \pstatev = \enab 
          \land a \neq \nullv   \land  v\neq \nullv \land s \geq \constant{0} \\
          {}\land
         \pstatev' = \enab   \\
          {}\land  \appuser' =
                  \smtlambda{j}{
                    \smtifinline{j=i}
                      {a}
                      {\appuser[j]}
                  }
                  \\{}\land
      \appvisa' =
                  \smtlambda{j}{
                    \smtifinline{j=i}
                      {v}
                      {\appscore[j]}
                  }
                  \\{}\land
      \appscore' =
                  \smtlambda{j}{
                    \smtifinline{j=i}
                      {s}
                      {\appscore[j]}
                  }
        \end{array}
      \right)
    \end{array}
 $$
Every two weeks, submitted applications get evaluated: this is modeled by nondeterministically assigning $\constant{evaluation}$ to the variable $\pstatev$. Then, the evaluation phase concludes with highly evaluated applications being approved and other being rejected. This is realised using the following  \emph{bulk update} transition:
$$
    \small
    \begin{array}{@{}l@{}}
      \pstatev = \constant{evaluation} 
       \land \pstatev'=\evalu  \\
      \land \appres' = \smtlambda{j}{
                           \smtif{c}{\appscore[j] > \constant{80}}
                           {\constant{approved}}{\constant{rejected}}
                         }
    \end{array}
$$
  Then, if there is at least one approved application, one can nondeterministically select an applicant with the positive result to be notified using the memory variable $\tonotify$:
$$
    \footnotesize
    \begin{array}{@{}l@{}}
        \exists \typedvar{i}{\appidx}\\
      \left(
        \begin{array}{@{}l@{}}
          \pstatev = \evalu \land  \pstatev' = \notified  \\
          {}\land  \appuser[i] \neq \nullv \land \appres[i]=\constant{approved} \\
        {}\land  \tonotify'= \appuser[i]  \land \visastat'=\appres[i] \\
        \end{array}
      \right)
    \end{array}
$$
 
Finally, we demonstrate a transition with a \emph{universal guard} that checks whether no application has been approved and, if so,  changes the process state to $\constant{no-visa}$:
 \resizebox{0.48\textwidth}{!}{$
  \begin{array}{@{}l@{}}
    \pstatev = \evalu \land \forall k \; (\appres[k] \neq \constant{approved}) \\
     {}\land \pstatev'=\constant{no-visa} 
     \end{array}
$}

\end{example}

\newcommand{\breach}{\ensuremath{\mathsf{BReach}_{\text{\uras}}}\xspace}

\section{Parameterized Safety Verification}\label{sec:safety-uras}
We now turn to safety verification of an \uras $\cS$. To do so robustly, we follow the approach from the literature on artifact systems \cite{DHLV18}, and in particular the formulation given in \cite{MSCS20} for \ras{s}. As we will see next, while the problem is formulated analogously for \ras{s} and  \uras{s}, the corresponding algorithmic techniques are substantially different, and so is proving their (meta-)properties. 
The main idea is to analyse whether $\cS$ is safe \emph{independently from} the specific configuration of read-only, static data, i.e., for every instance of the $\DB$ schema of $\cS$. ``Being safe'' means that $\cS$ never reaches a state satisfying an undesired, state formula $\upsilon(\ux,\ua)$, called \emph{unsafe formula}. Technically, we say that $\cS$ is \emph{safe w.r.t.} $\upsilon$ if there is no DB-instance $\cM$ of $\tup{\ext{\Sigma},T}$, no $k\geq 0$, and
no assignment in $\cM$ to the variables $\ux^0,\ua^0 \dots, \ux^k, \ua^k$ such
that the formula
\begin{equation}\label{eq:smc1}
  \begin{array}{@{}l@{}l@{}}
    \iota(\ux^0, \ua^0)
    & {}\land \tau(\ux^0,\ua^0, \ux^1, \ua^1)\\
    & {}\land \cdots
    \land\tau(\ux^{k-1},\ua^{k-1}, \ux^k,\ua^{k})
    \land \upsilon(\ux^k,\ua^{k})
  \end{array}
\end{equation}
is true in $\cM$ ($\ux^i$, $\ua^i$ are renamed copies of $\ux$, $\ua$). Formula 
can be seen as a \emph{symbolic unsafe trace}. The safety problem $(\cS,\upsilon)$ consists of establishing whether $\cS$ is safe w.r.t $\upsilon$.

\begin{example}\label{ex:v-prop}
The following formula describes an unsafety property for the \uras from Example~\ref{ex:v-short}, checking whether the evaluation notification is directed to an applicant whose application was rejected:
$$
    \begin{array}{@{}l@{}}
      \exists  \typedvar{i}{\appidx}\\
      \left(
        \begin{array}{@{}l@{}}
        \appuser[i]\neq \nullv \land \tonotify = \appuser[i] \\
        \land \visastat = \constant{rejected} \land \pstatev = \notified
       \end{array}
     \right)
    \end{array}
  $$
%


\end{example}

\SetKwInOut{Input}{input}
\begin{algorithm}[t]
\SetKwProg{Fn}{Function}{}{end}
\Input{\small \uras $\tup{\tup{\Sigma,T},\ext{\Sigma}, \ux, \ua, \iota(\ux,\ua),
     \tau(\ux,\ua,\ux',\ua')}$}
\Input{\small (Unsafe) state formula $\upsilon(\ux,\ua)$}
\setcounter{AlgoLine}{0}
\ShowLn$P\longleftarrow \upsilon$;  $\ApproxB\longleftarrow \bot$\;
\ShowLn\While{$P\land \neg \ApproxB$ is $T$-satisfiable}{
\ShowLn\If{$\iota\land P$ is $T$-satisfiable}
{\textbf{return}  $(\unsafe, \mbox{\emph{unsafe trace} witness})$}
\setcounter{AlgoLine}{3}
\ShowLn$\ApproxB\longleftarrow P\vee \ApproxB$\;
\ShowLn$P\longleftarrow \mathit{\InstPre}(\tau, P)$\;
\ShowLn$P\longleftarrow \mathsf{Covers}(T,P)$\;
}
\textbf{return} (\safe, $\ApproxB$);}{
\caption{\breach}\label{alg1}
\end{algorithm}

\smallskip
\noindent
\textbf{Safety verification procedure.} 
Algorithm~\ref{alg1} introduces the \breach procedure for safety verification and shows how the backward reachability procedure for SMT-based safety verification, originally introduced in \cite{lmcs} for array-based systems and then refined in \cite{MSCS20} to deal with \ras{s}, can be effectively extended to handle the advanced features of \uras{s}. 
\breach takes as input a \uras $\cS$ and an unsafe formula $\upsilon$. The main loop starts from the undesired states of the system (symbolically represented by  $\upsilon$), and explores \emph{backwards} the system state space by iteratively computing, symbolically, the set of states that can reach the undesired ones (and are hence undesired as well). Every iteration of the loop \emph{regresses} the current undesired states by considering the transitions $\tau$ of $\cS$ in a reversed fashion. As we describe next, due to the 
presence of 
universal guards, 
 transitions can be reversed only in an approximated way. The computation of symbolic, approximated preimages of the current undesired state is handled in Lines~5 and~6 of \breach.

Let $\psi(\ux,\ua)$ be a state formula, describing the state of  variables $\ux,\ua$.
The \emph{exact preimage} of the set of states described by
$\psi(\ux,\ua)$ is the set of states described by
$\textit{Pre}(\tau,\psi)$ (notice that, when $\tau=\bigvee_i\univtrans_i$,
        then $\textit{Pre}(\tau,\psi)=\bigvee_i\textit{Pre}(\univtrans_i,\psi)$). This is the exact set of states that, by executing $\tau$ one time, reaches the set of states described by $\psi$.
The main issue we incur in doing so is that a state formula is an existentially quantified $\Sigma$-formula over indexes only. While, by assumption, $\psi$ is a state formula, $\textit{Pre}$ is not, making it impossible to reiterate the preimage computation. This is due to universally quantified `index' variable $k$ and existentially quantified `data' variables $\ud$ in $\tau$. To attack this problem, we introduce in Lines~5 and~6 of \breach two \emph{over-approximations}, guaranteeing that the preimage is a proper state formula and that we can \emph{regress} undesired states via preimage computation an arbitrary amount of times.

The first approximation (Line~5) compiles away the universal quantifier ranging over the index $k$ through \emph{instantiation}, by invoking $\InstPre(\tau,\psi)$. Given $\tau:=\bigvee^p_{r=1} \univtrans_r$, let $\forall k \; \gamma^{r}_u(k, \ue, \ud, \ux, \ua)$ be the universal guard of $\univtrans_r$ for all $r=1,\dots,p$. $\InstPre(\tau,\psi)$ approximates $\mathit{Pre}$ by instantiating the universally quantified `index' variable $k$ with the existential `index' variables appearing in $Pre(\univtrans_r,\psi)$, for all $r=1,\dots,p$. Formally,   given $\psi:=\exists \ue_1 \varphi_1(\ue_1,\ux,\ua)$,    $ \InstPre(\univtrans_r,\psi)$ is the formula obtained from
%
%
\begin{scriptsize}
 $
\!\exists\, \ux',\ua',\ue,\ue_1,\ud\! \left( \!\left(\!\! 
      \begin{array}{@{}l@{}l@{}}
      \gamma(\ue,\ud,\ux,\ua)\\  {}\land  \bigwedge_{k\in\ue\cup\ue_1}\gamma^{r}_u(k, \ue, \ud, \ux, \ua) \\
      {}\land\bigwedge_i x'_i= F_i(\ue, \ud, \ux,\ua) \\
       {}\land \bigwedge_j a'_j=\lambda y. G_j(y,\ue,  \ud, \ux,\ua)
      \end{array}
    \!\right)\! \land \varphi_1(\ue_1,\ux',\ua') \!\!\right)$%
    \end{scriptsize}
        by making the appropriate substitutions (followed by beta-reduction)  in order to  eliminate the existentially quantified variables $\ux'$ and $\ua'$ (see, e.g., \cite{MSCS20} for details).
The second approximation (Line~6) takes the so-computed result  $\InstPre(\tau,\psi)\equiv\exists \ue_2,\ud\, \varphi_2$, and compiles away the existentially quantified data variables $\ud$. 
 This is done by invoking $\mathsf{Covers}(T,\exists \ue_2,\ud\, \varphi_2)$, which `eliminates' the $\ud$ variables via the cover computation algorithm of $T$ described in \cite{IJCAR20}.\footnote{Recall that the combined theory $T^{DB}\cup T^{ar}$ admits covers.} Both approximation operators are  $T$-implied by the exact preimage operator. 

Together, Lines~5 and~6 produce proper state formula $\psi'$ that can be now fed into another approximate preimage computation step. In fact, \breach, iteratively computes such preimages starting from the unsafe formula $\upsilon$, until one of two possible termination conditions holds: these conditions are tested by SMT-solvers. The first condition occurs when the \emph{non-inclusion} test in Line~2 fails, detecting a \emph{fixpoint}: the set of current unsafe
 states is included in the set of states
reached so far by the search. In this case, \breach stops returning that $\cS$ is \emph{safe} w.r.t.\ $\upsilon$. The second termination condition, occurring when the \textit{non-disjointness} test in Line~3 succeeds, detects that set of current unsafe states intersects the initial states (i.e., satisfies $\iota$). In this case, \breach stops returning that $\cS$ is \emph{unsafe} w.r.t.\ $\upsilon$, together a symbolic unsafe trace of the
form~\eqref{eq:smc1}.  
 Such a trace provides a sequence of transitions
$\univtrans_i$ that, starting from the initial configurations, witness how $\cS$ can evolve from an initial state in $\iota$, under some instance of its $\DB$, to a state satisfying the unsafe formula $\upsilon$. 

Being preimages computed in an over-approximated way, unsafety may be spuriously returned, together with an unsafe spurious trace that cannot be actually produced by the \uras under study. We next study this and other meta-properties of \breach.

\smallskip
\noindent
\textbf{Meta-properties} Consider some procedure for verifying safety of \uras{s}. Given a \uras $\cS$ and an unsafe formula $\upsilon$, we say that a \safe (resp.~\unsafe) produced result is \emph{correct} iff $\cS$ is safe (resp.~unsafe) w.r.t.~$\upsilon$. We use this notion to define some key meta-properties.

\begin{definition}\label{ras-meta-prop}
Given a \uras $\cS$ and an unsafe formula $\upsilon$, a  procedure for verification unsafety of $\cS$ w.r.t. $\upsilon$ is: 
\begin{inparaenum}[\it (i)]
\item \emph{sound} if, when terminating, it returns a correct result;
\item \emph{partially sound} if a \safe result is always correct;
\item \emph{complete} if, whenever \unsafe is the correct result, then it returns so.
\end{inparaenum}
\label{def:properties}
\end{definition}


By using that reachable states via approximated preimages include those obtained via exact preimages, we get:

\begin{proposition}\label{prop:safe}
If \breach returns \safe when applied to the safety problem $(\cS,\upsilon)$, then $\cS$ is safe w.r.t. $\upsilon$.
\end{proposition}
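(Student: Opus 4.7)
The plan is to show partial soundness via the standard fixpoint argument for backward reachability, adapted to the two over-approximating operators $\InstPre$ and $\mathsf{Covers}$. The crucial ingredient is that both are sound over-approximations of the exact preimage, in the sense that
\[
T \models \mathit{Pre}(\tau,\psi) \to \InstPre(\tau,\psi),
\qquad
T \models \exists\ud\,\varphi \to \mathsf{Covers}(T,\exists\ud\,\varphi).
\]
The first holds because $\InstPre(\tau,\psi)$ is obtained from $\mathit{Pre}(\tau,\psi)$ by dropping the universal guard on $k$ and replacing it with the conjunction of its instances on the finitely many index variables actually occurring in the preimage, which is logically weaker. The second is immediate from the very definition of cover.

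Let $P_0 := \upsilon$ and $P_{n+1} := \mathsf{Covers}(T,\InstPre(\tau,P_n))$ be the sequence of state formulae computed at the start of each iteration of \breach, and let $\ApproxB_n := \bigvee_{j<n} P_j$ be the value of $\ApproxB$ at iteration $n$. First I would establish, by induction on $n$, that
\[
T \models \mathit{Pre}^{n}(\tau,\upsilon) \to P_n,
\]
using the two approximation inequalities above together with monotonicity of $\mathit{Pre}$, $\InstPre$, and $\mathsf{Covers}$ in their state-formula argument.

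Now assume \breach returns \safe at some iteration $n$. Then by the exit condition of the main loop, $T \models P_n \to \ApproxB_n$, and by the fact that line~3 did not trigger in any previous iteration, $\iota \land P_j$ is $T$-unsatisfiable for all $j < n$. A second induction on $k \geq n$ shows that $T \models P_k \to \ApproxB_n$: the base case $k=n$ is the fixpoint condition, and the inductive step uses the monotonicity of $\mathsf{Covers}\circ\InstPre(\tau,\cdot)$ applied to the inductive hypothesis, noting that $\mathsf{Covers}(T,\InstPre(\tau,\ApproxB_n)) \equiv \bigvee_{j<n} P_{j+1}$, which is in turn contained in $\ApproxB_{n+1} \subseteq \ApproxB_{n+1} \cup \ApproxB_n$; iterating this gives $\ApproxB_{n+k}$, which by the fixpoint property collapses back into $\ApproxB_n$ modulo $T$. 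Combining the two inductions yields $T \models \mathit{Pre}^k(\tau,\upsilon) \to \ApproxB_n$ for every $k\geq 0$, and hence $\iota \land \mathit{Pre}^k(\tau,\upsilon)$ is unsatisfiable for every $k$.

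Finally, I would translate this conclusion into unsatisfiability of every symbolic unsafe trace~\eqref{eq:smc1}: existentially projecting formula~\eqref{eq:smc1} onto the variables $\ux^0,\ua^0$ yields exactly $\iota(\ux^0,\ua^0) \land \mathit{Pre}^k(\tau,\upsilon)(\ux^0,\ua^0)$, so if no such projection is satisfiable in any DB-instance, no unsafe trace exists and $\cS$ is safe with respect to $\upsilon$. The only delicate point in the plan is the monotonicity of $\mathsf{Covers}$ under $T$-entailment, which I expect to be the main obstacle to write out carefully; however it follows directly from the universal property of the cover as the strongest quantifier-free $T$-consequence, so the argument should go through cleanly.
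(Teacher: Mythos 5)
Your overall strategy is the same as the paper's: establish that every exact preimage iterate is $T$-entailed by the corresponding approximated one, use the fixpoint reached at termination to collapse all further iterates into $\ApproxB_n$, and conclude that $\iota\land\mathit{Pre}^k(\tau,\upsilon)$ is $T$-unsatisfiable for every $k$, which is precisely the non-existence of a trace of the form~\eqref{eq:smc1}. Your first induction ($T\models\mathit{Pre}^n(\tau,\upsilon)\to P_n$) and the final projection step are correct, and they only need monotonicity of the \emph{exact} preimage together with the pointwise over-approximation entailments $\mathit{Pre}(\tau,P_j)\to\InstPre(\tau,P_j)\to\mathsf{Covers}(T,\InstPre(\tau,P_j))$.

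The gap is in your second induction. To pass from $T\models P_k\to\ApproxB_n$ to $T\models P_{k+1}\to\ApproxB_n$ you invoke monotonicity of $\mathsf{Covers}\circ\InstPre(\tau,\cdot)$ under $T$-entailment. For $\mathsf{Covers}$ this does follow from the universal property of covers, as you note; for $\InstPre$ it does not. $\InstPre(\univtrans_r,\psi)$ instantiates the universal guard $\forall k\,\gamma_u$ only at the index variables \emph{syntactically occurring} in $\univtrans_r$ and in $\psi$. If $T\models\psi\to\psi'$, a state satisfying $\InstPre(\univtrans_r,\psi)$ produces a successor satisfying $\psi'$, but the existential index witnesses for $\psi'$ need not be among the finitely many points at which $\gamma_u$ was checked, so $\InstPre(\univtrans_r,\psi')$ need not hold: semantic monotonicity of $\InstPre$ is unjustified in general, and the inductive step as written does not go through. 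The repair is exactly the route the paper takes: run the post-fixpoint induction on the exact operator rather than the approximated one, i.e.\ prove $T\models\mathit{Pre}^k(\tau,\upsilon)\to\ApproxB_n$ for all $k$ directly. For $k<n$ this is your first induction plus the fact that $P_k$ is a disjunct of $\ApproxB_n$; for the step, $\mathit{Pre}^{k+1}(\tau,\upsilon)\to\mathit{Pre}(\tau,\ApproxB_n)=\bigvee_{j<n}\mathit{Pre}(\tau,P_j)\to\bigvee_{j<n}P_{j+1}\to\ApproxB_n\lor P_n\to\ApproxB_n$, using only monotonicity of the semantically defined $\mathit{Pre}$, the pointwise over-approximation entailments, and the exit condition $T\models P_n\to\ApproxB_n$. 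With this rerouting your argument coincides with the paper's Lemma~\ref{safety-preserv} and its point (ii).
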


It can be also proved (see the appendix) that in case the \uras $S$ is \emph{unsafe} w.r.t.~$\upsilon$, the procedure indeed terminates returning \unsafe. 
We next show that \breach can only be used partially to verify unsafety of \uras{s}. In the theorem, 
\emph{effectiveness} means that all subprocedures in \breach can be effectively computed (from \cite{MSCS20,arxivbeth}, the occurring $T$-satisfiability tests are decidable).

\begin{theorem}\label{thm:partial-sound}
\breach is  \emph{effective}, \emph{partially sound} and \emph{complete} when verifying unsafety of \uras{s}.
\end{theorem}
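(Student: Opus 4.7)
The plan is to establish the three properties — effectiveness, partial soundness, and completeness — in turn, with the central idea being that both over-approximation operators applied in Lines~5 and~6 of \breach are $T$-implied by the exact preimage operator $\mathit{Pre}(\tau,\cdot)$. For \emph{effectiveness}, the two satisfiability tests in Lines~2 and~3 are quantifier-free $T$-satisfiability instances over the combined theory $T=T^{DB}\cup T^{ar}$, whose decidability is guaranteed by the results recalled in~\cite{MSCS20}; the operator $\InstPre$ in Line~5 is a syntactic rewriting performed via substitution, beta-reduction, and instantiation of the universal variable $k$ with the (finitely many) index variables already occurring in the current state formula and in the plain guard; and $\mathsf{Covers}(T,\cdot)$ in Line~6 is effective by the cover-computation algorithm of~\cite{IJCAR20}. \emph{Partial soundness} is exactly Proposition~\ref{prop:safe}: whenever \breach halts with $(\safe,\ApproxB)$, the over-approximation $\ApproxB$ contains every state from which $\upsilon$ is reachable and is disjoint from $\iota$, so no concrete unsafe trace can exist.

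For \emph{completeness}, suppose $\cS$ is unsafe w.r.t.~$\upsilon$, so that there exist $k\geq 0$, a $\DB$ instance $\cM$ and an assignment to $\ux^0,\ua^0,\dots,\ux^k,\ua^k$ making formula~\eqref{eq:smc1} true in $\cM$. Let $P_n$ denote the state formula held by the variable $P$ at the beginning of the $n$-th iteration of the while-loop, with $P_0=\upsilon$. I claim by induction on $n$ that $\mathit{Pre}^n(\tau,\upsilon)\models_T P_n$. The base case $n=0$ is immediate; for the inductive step, instantiating the universal guard $\forall k\,\gamma_u(k,\dots)$ with any concrete term yields a consequence of the universal formula, so $\mathit{Pre}(\tau,\psi)\models_T \InstPre(\tau,\psi)$, while by definition of cover $\exists\ud\,\varphi\models_T \mathsf{Covers}(T,\exists\ud\,\varphi)$. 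Composing the two approximations with monotonicity of $\mathit{Pre}$ and the inductive hypothesis yields the claim. In particular the prefix $(\ux^0,\ua^0)$ of the concrete trace satisfies $\mathit{Pre}^k(\tau,\upsilon)$ in $\cM$, hence also $P_k$, so $\iota\land P_k$ is $T$-satisfiable. The disjointness test in Line~3 must therefore succeed no later than iteration $k$, unless the procedure has already terminated earlier with \safe; but this is excluded by partial soundness, since $\cS$ is assumed unsafe. Thus \breach terminates with \unsafe and returns a symbolic unsafe trace of the shape~\eqref{eq:smc1}.

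The main obstacle is ensuring that the two approximation steps compose so as to preserve, along the induction, the class of formulae they act on. After $\InstPre$ the formula still mentions the `data' variables $\ud$ under an existential quantifier; only the subsequent call to $\mathsf{Covers}$ eliminates them, yielding again an existential quantification over index variables alone, which is the class of state formulae on which the next iteration must operate. Verifying that each step individually is $T$-implied by $\mathit{Pre}(\tau,\cdot)$ and that the final output stays in the class of state formulae so that the induction can be iterated is the delicate bookkeeping point of the argument; the rest reduces to standard monotonicity of the preimage operator and basic properties of cover computation.
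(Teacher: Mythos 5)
Your proposal is correct and follows essentially the same route as the paper: the key step in both is that $\InstPre$ and $\mathsf{Covers}$ are $T$-implied by the exact preimage, which (by monotonicity of $\mathit{Pre}$ and induction) gives that the exact backward-reachable sets entail the approximated ones, from which partial soundness and completeness follow, while effectiveness reduces to decidability of the $T$-satisfiability tests and computability of covers. Your induction is phrased on $\mathit{Pre}^n(\tau,\upsilon)\models_T P_n$ rather than on the accumulated sets $B_k\rightarrow\tilde B_k$ as in the paper's Lemma, but this is an inessential variation.
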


\uras{s} that do not employ universal guards collapse to \ras{s} equipped
 with numerical values and arithmetics. By carefully combining previously known results on soundness and completeness of backward reachability of \ras{s} \cite{MSCS20}, and on the existence of combined covers for \EUF\ and \LIA/\LRA \cite{JAR22}, for this class of systems we obtain the following.

\begin{theorem}
  \label{thm:semi-decision}
\breach is \emph{effective}, \emph{sound} and \emph{complete} (and hence a semi-decision procedure) when verifying safety of \uras{s} without universal guards.
\end{theorem}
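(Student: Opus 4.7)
The plan is to reduce to the previously established semi-decision result for safety of \ras{s} from \cite{MSCS20} and to plug in the combined cover algorithm for $\EUF\cup\LIA/\LRA$ from \cite{JAR22}. The crucial simplification is that when universal guards are absent, the universally quantified index variable $k$ of Line~5 in \breach disappears and $\InstPre(\tau,\psi)$ reduces to the exact preimage $\textit{Pre}(\tau,\psi)$; thus the instantiation step ceases to be an approximation, leaving only the cover step on Line~6 to be scrutinised.

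Effectiveness is immediate: the combined cover algorithm of \cite{JAR22} computes $\mathsf{Covers}(T,\cdot)$ for $T=T^{DB}\cup T^{ar}$ with $T^{ar}\in\{\LIA,\LRA\}$, while the $T$-satisfiability tests on Lines~2 and~3 are decidable by composing the SMT procedure for $T^{DB}$ with that for $T^{ar}$. Completeness has already been established in Theorem~\ref{thm:partial-sound}, whose argument is independent of the presence of universal guards; and the \safe case of soundness is exactly Proposition~\ref{prop:safe}. The only genuinely new ingredient is therefore \emph{full} soundness, i.e.~ruling out spurious \unsafe answers introduced by the cover step.

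To exclude spurious unsafety, I would exploit that $\mathsf{Covers}(T,\exists\ud\,\varphi)$ is, by definition, the strongest quantifier-free $T$-consequence of $\exists\ud\,\varphi$, and hence is $T^*$-equivalent to it, where $T^*$ is the model completion of $T$. Consequently, whenever \breach returns \unsafe after $k$ rounds, the trace formula~\eqref{eq:smc1} is $T^*$-satisfiable. Since every $\DB$-instance of $T$ embeds into a model of $T^*$ and~\eqref{eq:smc1} is a purely existential statement over indexes and data variables, a standard substructure-preservation argument transfers its satisfiability back to a genuine $\DB$-instance of $T$, yielding an actual unsafe trace of the original \uras.

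The main technical obstacle is checking that the model-completion machinery of \cite{MSCS20} extends to the combined theory $T^{DB}\cup T^{ar}$: one must verify that this theory admits a model completion, that the combined cover of \cite{JAR22} realises quantifier-free uniform interpolation for it, and that embeddings between $T$-models and $T^*$-models respect both the id-sort constraints and the arithmetic interpretation of value sorts. The tame-combination structure of $\DB$ schemas (id sorts and arithmetic sorts interact only through unary functions whose codomain is a value sort) should make this amalgamation go through without interference; once this compatibility is established, the proof specialises cleanly from the pure-\ras{} case.
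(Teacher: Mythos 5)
Your proof is correct and follows essentially the same route as the paper: it reduces completeness and the \safe case to Theorem~\ref{thm:partial-sound} and Proposition~\ref{prop:safe}, observes that without universal guards $\InstPre$ collapses to the exact preimage so that only the cover step remains to be justified, and then discharges full soundness via the cover/model-completion correspondence, relying on \cite{JAR21,JAR22} for the combined theory $T^{DB}\cup T^{ar}$. Your write-up is in fact more explicit than the paper's own proof, which merely points to the analogous model-completion arguments of \cite{MSCS20} together with the combined-cover results.
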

This witnesses that, for this class of systems, no spurious results are produced regarding unsafety.


\section{Safety Universal Invariants}\label{sec:inva}

Injecting and exploiting invariants in the verification of dynamic systems is a well-known 
approach. Also, instantiation is a widely studied in invariant checking~\cite{shahom}. In our setting, 
invariants can dramatically prune the search space of backward reachability procedures, in general also increasing the chances that the procedure terminates~\cite{lmcs}.  
We contribute to this line by introducing a suitable notion of (universal) invariant for \uras{s}, and by studying its impact to safety verification.

\begin{definition}\label{def:invariants}
 Formula $\phi(\ux,\ua)=:\forall \ui\, \psi(\ui, \ux, \ua)$ (with $\ui$ variables of artifact sort) is an
\emph{(inductive) universal invariant} for a \uras
$\cSi$ iff (a) $T\models\iota(\ux, \ua) \Rightarrow 
\phi(\ux, \ua)$, and (b) $T\models  \phi(\ux, \ua)\wedge
\tau(\ux,\ua,\ux', \ua') \Rightarrow  \phi( \ux',\ua')$. 
 If, in addition to (a) and (b), we also have that (c)
$ \phi( \ux,\ua)\wedge \upsilon(\ux,\ua)$ is $T$-unsatisfiable,
then we say that  $ \phi(
\ux,\ua)$ is a \emph{safety universal invariant} for the safety problem
$(\cSi,\upsilon)$.  
\end{definition}
Following the arguments from~\cite{MSCS20} the $T$-satisfiability tests required in (a), (b), and (c) are
decidable. 
Using invariants for safety verification 
is, to some extent, ``more general'' than using
(variants of) backward reachability.  Unfortunately, the so-called \emph{invariant method} is more
challenging because finding safety invariants cannot be
mechanized (see, e.g., \cite{lmcs}).
Yet, if a (not necessarily safety) universal invariant 
has been found in some way (e.g.,
by using some heuristics) or supplied by a trusted knowledge source,
then it can be effectively employed in the fixpoint test of backward reachability, e.g., by  replacing Line 2 of
Algorithm~\ref{alg1} 
with:
\begin{quote}
  \centering
  2$'$ \textbf{while} ($P\wedge \mathit{Inv}\wedge \neg \ApproxB$ is $T$-sat.) \textbf{do}
\end{quote}
where $Inv$ is a conjunction of universal invariants. 
This further constrains the formula $P\wedge \neg \ApproxB$ with $\mathit{Inv}$, possibly increasing the chances to detect unsatisfiability. 


Let $P_n$ the value of the variable $P$ at the $n$-th iteration of the main loop \breach. We prove the following, key result on how safety universal invariants relate to the (approximated) preimages computed by \breach.

\begin{theorem}\label{thm:inv}
If a safety universal invariant $\phi$  exists for a \uras $\cS$ w.r.t. $\upsilon$, then for every $n \in \mathbb{N}$, we have $P_n\rightarrow \neg\phi$.
\end{theorem}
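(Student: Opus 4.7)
The plan is to prove the theorem by induction on $n$. For the base case $n=0$, we have $P_0=\upsilon$, and condition (c) of Definition~\ref{def:invariants} gives that $\phi\wedge\upsilon$ is $T$-unsatisfiable, so $T\models P_0\rightarrow\neg\phi$ immediately.

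For the inductive step, I would argue by contradiction: assume $T\models P_n\rightarrow\neg\phi$ and suppose there exist a model $\cM$ of $T$ and an assignment $\alpha$ with $\cM,\alpha\models P_{n+1}\wedge\phi$. The strategy is to exhibit a contradiction by producing a companion model in which a concrete transition $\univtrans_r$ fires, so that condition (b) of Definition~\ref{def:invariants} and the inductive hypothesis force incompatible verdicts on the successor state. The construction has two stages, each addressing one of the two over-approximations in \breach.

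First, since $P_{n+1}=\mathsf{Covers}(T,\InstPre(\tau,P_n))$ and $\mathsf{Covers}$ eliminates only the basic-sort data variables $\ud$, I would extend $\cM$ to a model $\cM^*$ of the model completion $T^*$ of $T$, enlarging \emph{only basic sorts} while leaving artifact sorts untouched. The defining property of covers in $T^*$ then guarantees $\cM^*,\alpha\models\InstPre(\tau,P_n)$, witnessed by some basic-sort elements $\ud^*\in\cM^*$ and artifact-sort elements $\ue,\ue_1$ already present in $\cM$. Since $\phi=\forall\ui\,\psi(\ui,\ux,\ua)$ has $\ui$ ranging over artifact sorts (which are unchanged) and $\psi$ is quantifier-free, $\phi$ transfers faithfully: $\cM^*,\alpha\models\phi$. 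Second, I would take the substructure $\cN$ of $\cM^*$ whose artifact sorts retain only the instantiated witnesses $\ue\cup\ue_1$ (plus the artifact elements needed for $\alpha$), keeping basic sorts as in $\cM^*$. Because the universal guard $\forall k\,\gamma_u(k,\ue,\ud^*,\ux,\ua)$ was instantiated exactly over $\ue\cup\ue_1$ and $\cN$'s artifact sorts are contained in that set, the genuine (non-instantiated) universal guard is satisfied in $\cN$; combined with the existential guard, the updates, and $\varphi_1$, this yields $\cN,\alpha\models\tau(\ux,\ua,\ux',\ua')$ with the successor satisfying $P_n$.

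To close the argument, I would use two preservation facts about $\cN$. On the one hand, $\phi$ is universal in artifact sorts, hence preserved under restriction to $\cN$, so $\cN,\alpha\models\phi(\ux,\ua)$; together with $\tau(\ux,\ua,\ux',\ua')$ true in $\cN$, condition (b) of Definition~\ref{def:invariants} forces $\cN\models\phi(\ux',\ua')$. On the other hand, the inductive hypothesis applied inside $\cN$ (which is a model of $T$) to the successor state satisfying $P_n$ forces $\cN\models\neg\phi(\ux',\ua')$. This contradiction completes the induction.

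The main obstacle will be the simultaneous handling of the two model-theoretic moves: extending basic sorts to $\cM^*$ to justify the cover step, and restricting artifact sorts to $\cN$ to turn the instantiated guard into a genuine universal guard. The proof works because these two operations act on disjoint parts of the signature, so the universal invariant $\phi$ (quantifying over artifact sorts) is preserved across both, while the data witnesses $\ud^*$ introduced by the cover live in the untouched basic sorts and remain available in $\cN$.
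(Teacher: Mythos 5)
Your proposal is correct and follows essentially the same route as the paper: induction on $n$, with the inductive step handled by the two complementary model-theoretic moves (extending the basic-sort reduct to witness the cover, then restricting the artifact sorts to the instantiated indices so the instantiated guard becomes a genuine universal guard), exploiting that these operations act on disjoint parts of the signature and both preserve the universal invariant $\phi$. The only cosmetic difference is that the paper first packages condition (b), monotonicity of $\mathit{Pre}$, and the inductive hypothesis into the implication $T\models \mathit{Pre}(\tau,P_n)\rightarrow\neg\phi$ and then exhibits your model $\cN$ as a countermodel, whereas you apply (b) and the inductive hypothesis directly inside $\cN$ — the same argument unfolded in a different order.
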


\begin{proof}[Sketch]
The proof is by induction and is fully reported in the Appendix. It is based on the argument that, in the case of RABs, it is always possible to extend a structure w.r.t. its $\Sigma$-reduct and at the same time to restrict the given structure  w.r.t. its $\ext{\Sigma}\setminus\Sigma$-reduct, and these `opposite' constructions does not interfere with each other. 
\end{proof}

By combining Definition~\ref{def:invariants} and Theorem~\ref{thm:inv} we get:
\begin{corollary}\label{cor:inv}
If there exists a safety universal invariant $\phi$ a \uras
$\cSi$, then \breach cannot return \unsafe. 
\end{corollary}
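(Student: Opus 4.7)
The plan is to derive the corollary as a direct consequence of Theorem~\ref{thm:inv} together with conditions (a) and (c) from Definition~\ref{def:invariants}, by contradiction on the termination step of \breach.

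First I would recall the structure of \breach: the only way the procedure outputs \unsafe is at Line~3, namely when at some iteration $n \geq 0$ the formula $\iota(\ux, \ua) \wedge P_n(\ux, \ua)$ is $T$-satisfiable, where $P_n$ denotes the value of the variable $P$ at the $n$-th iteration of the main loop (with $P_0 := \upsilon$). So, assuming a safety universal invariant $\phi$ exists, it suffices to show that $\iota \wedge P_n$ is $T$-unsatisfiable for every $n \in \mathbb{N}$.

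The key step is the application of Theorem~\ref{thm:inv}: since $\phi$ is a safety universal invariant for $(\cSi, \upsilon)$, we have $T \models P_n(\ux, \ua) \to \neg \phi(\ux, \ua)$ for every $n$, i.e., $P_n \wedge \phi$ is $T$-unsatisfiable. By condition (a) of Definition~\ref{def:invariants}, $T \models \iota(\ux, \ua) \to \phi(\ux, \ua)$, so any $T$-model (with assignment) of $\iota \wedge P_n$ would also be a $T$-model of $\phi \wedge P_n$, contradicting its unsatisfiability. Hence $\iota \wedge P_n$ is $T$-unsatisfiable for all $n$, so the test at Line~3 of \breach never succeeds, and therefore \breach cannot return \unsafe.

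No real obstacle is expected here, since all the work has been done in Theorem~\ref{thm:inv}; the corollary is essentially a one-line consequence together with the initial-state condition (a) of safety universal invariants. I would only need to be careful to phrase things in terms of $T$-satisfiability (rather than classical satisfiability), consistently with how the termination tests of \breach are formulated.
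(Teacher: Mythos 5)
Your proof is correct and follows essentially the same route as the paper: both invoke Theorem~\ref{thm:inv} to get $T\models P_n\rightarrow\neg\phi$ and then combine this with condition (a) of Definition~\ref{def:invariants} (the paper via the contrapositive chain $P_n\rightarrow\neg\phi\rightarrow\neg\iota$, you via a direct unsatisfiability argument) to conclude that the test at Line~3 never succeeds. The two formulations are logically identical, so there is nothing to add.
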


Corollary~\ref{cor:inv} implies that if there is a safety universal invariant for $\cSi$, then Algorithm~\ref{alg1} cannot find any spurious unsafe traces. We know from Proposition~\ref{prop:safe} that in case Algorithm~\ref{alg1} returns $(\safe,\ApproxB)$, 
then the system is safe: notice also that in this case, $\neg \ApproxB$  is a safety universal invariant (see the appendix). However, if it returns an unsafe outcome, the answer could be wrong due to the presence of spurious unsafe traces. The importance of Corollary~\ref{cor:inv} lies in the fact that, even if we do not know any safety universal invariant, we are assured that in case one exists, the procedure behaves well/properly/in a fair way, as no unsafe outcome can be returned and therefore no spurious trace can be detected.

\begin{example}
The 
 formula $(\visastat=\constant{rejected}\rightarrow  \tonotify=\nullv)$ is a safety universal invariant for the safety problem $(\vras,\upsilon)$.     
\end{example}


\section{Practical Verification of \uras{s}}
\label{sec:first-exp}

\begin{table}\centering
\begin{scriptsize}
\resizebox{\columnwidth}{!}{%
        \begin{tabular}{r|lrrrrrrr}\toprule
       & \textbf{Example} & \textbf{Ar} & \textbf{UGuards} &\textbf{\#AC} & \textbf{\#AV} &\textbf{\#T}& \textbf{\#Q}&\textbf{\#In}  \\\midrule
            E01& Acquisition-following-RFQ & n & y &6 &13 &28 & 14 &7\\
            E02& Airline-Check-In &  y & n &1 & 33  &  48 & 3 & 5\\
            E03& Amazon-Fulfillment & n & y & 2 & 28 & 38  &17 & 11\\
            E04& BPI-Web-Registration-with-Moderator & n &  5  & 25  & 22  & 9 & 4 \\
             E05& BPI-Web-Registration-without-Moderator & n &  5  & 25  & 20  & 9& 4 \\
             E06& Bank-Account-Opening  & y & n &  7 & 25  & 16  & 6 & 4\\
            E07& Book-Writing-and-Publishing & n & y & 4 & 14 & 14 &10 & 4\\
            E08& Commercial-Financing & n &  7 & 14  & 34  & 4 & 9 \\
             E09& Credit-Review-and-Approval & n &  12 & 24  & 23  & 3 & 13\\
             E10& Customer-Quotation-Request & n & y &  9 & 11  & 21 & 11 & 8\\
             E11& Employee-Expense-Reimbursement-Alternative-1 & y & y & 3 & 17  & 21  & 8 & 1\\
              E12& Employee-Expense-Reimbursement-Alternative-2  &y  & y & 3 & 17   & 21 & 8 & 1 \\
             E13& Incident-Management-as-Collaboration & n & y & 3 & 20  & 20  & 10 & 3 \\
              E14& Incident-Management-as-Detailed-Collaboration & n & y &   3 & 20  & 20  &10 & 3 \\
              E15& Insurance-Claim-Processing &  y & n &4 &  22 & 22 &13 & 3 \\
             E16& Journal-Review-Process & n &  6 & 25  & 47 & 19 & 9 \\
             E17& LaserTec-Production-Process & y & n & 0  & 18  & 13 & 8  & 0 \\
             E18& Mortgage-Approval & n &  3 & 18  & 21 & 9 & 3 \\
              E19& New-Car-Sales & y & n & 0 & 23  & 31 & 10 & 0\\
             E20& Order-Fulfillment-and-Procurement & n &  3 & 11  & 24 & 7 & 2 \\
              E21& Order-Fulfillment & y & y & 7 & 17  & 27 & 7 & 4\\
              E22& Order-Processing-with-Credit-Card-Authorization  & y & y & 9 & 18  & 20 & 2  & 7\\
               E23& Order-Processing & y & n &  9 &  14 & 17  & 2 & 7 \\
               E24& OrderFulfillment\_new & y & y & 1 & 17   & 15 &  7 & 4\\
                E25& Patient-Treatment-Abstract-Process & n & y &  6 & 17  & 34 & 14 & 20\\
                E26& Patient-Treatment-Collaboration-Choreography & n & y & 6 & 17  & 34 & 15 & 20\\
	     E27& Patient-Treatment-Collaboration & n &  y &6 & 17  & 34 & 15 & 20 \\
             E28& Pizza-Co.-Delivery-Process & y & y & 2 & 32  & 32 & 10 & 2 \\
              E29& Property-and-Casualty-Insurance-Claim-Processing & n & y & 2  & 7  & 15 & 3 & 3\\
               E30& Ship-Process-of-a-Hardware-Retailer & y & n & 0 & 28  & 26 & 9 & 0 \\
                E31& The-Pizza-Collaboration & y & y & 2 & 32  & 37 & 12 & 2 \\
                E32& Travel-Booking-with-Event-Sub-processes & y & n & 14 & 32  & 51 & 9  & 14\\
                E33& Travel-Booking & y &  n &14  & 32 &  43 &  8 & 10 \\
            E+& JobHiring & y & y &9 & 18 & 15 & 7 & 6\\
            \\\bottomrule
        \end{tabular}
        }
        \caption{Summary of the tested examples}\label{tab:benchmark}
        \end{scriptsize}
    \end{table}

We now put our verification machinery in practice, exploiting the fact that, as shown in our technical development, we can employ the different modules of \mcmt to handle parameterized safety checking of \uras{s}. Specifically, we conduct an extensive experimental evaluation on \uras{s} representing concrete data-aware business processes. Our evaluation builds on the existing benchmark provided in \cite{verifas}, which samples 33 (32 plus one variant) real-world BPMN processes published in the official BPM website (\url{http://www.bpmn.org/}).\footnote{BPMN is the de-facto standard to capture business processes.}
We encode \emph{all} these 33 models
into the array-based specifications of \uras{s}, by exploiting the syntax of  the database-driven mode of \textsc{mcmt} (from Version~2.8). To set up our benchmark, we enrich 26 BPMN models out of the 33  that lend themselves to be directly extended with arithmetic and/or universal guards (features not supported by VERIFAS \cite{verifas}), and we construct 26 corresponding RABs that incorporate such advanced features. The remaining examples are reported for the sake of completeness, but fall into the already studied class of  RASs \cite{MSCS20}. We also develop a further example that incorporates all the modelling features of RABs (for example, \emph{bulk updates} that go beyond the capabilities of VERIFAS \cite{verifas}), with the intention of stress-testing MCMT. 
Each example from the benchmark is checked against 12 conditions, where at least one is safe and one is unsafe; our running example is checked against 33 conditions. Overall, we ran \textsc{mcmt} over 429 specification files.

Experiments were performed on a machine with macOS High Sierra 10.13.3, 2.3 GHz Intel Core i5 and 8 GB RAM.  The full benchmark set is available on GitHub \cite{benchmark-rabs}. 
%
%


 \begin{table} \centering
 \begin{tiny}
        \begin{tabular}{rrrrrrrrr}\toprule
            \textbf{Ex} & \textbf{\#U}&\textbf{\#S}&\textbf{MeanT} &\textbf{MaxT} &\textbf{StDvT} & \textbf{Avg\#(N)}  & \textbf{AvgD} &\textbf{Avg\#(calls)}
            \\\midrule
            E01& 9 & 3 & 0.82 & 1.28 (U) &  0.36 & 60.1 & 8.25  & 3398.2 \\
             E02 & 6 & 6 & 0.31 & 0.37 (S) & 0.04 & 6.7 & 4.33 & 4011.4\\
             E03 & 6 & 6 & 0.74 & 2.49 (S) & 0.69 & 24.4 & 6.75 & 4301.8\\
             E04 & 5 & 7 & 0.32 & 0.64 (U) & 0.18 & 15.0 & 7.67 &1899.7\\
             E05 & 5 & 7 & 0.23 & 0.51 (U) & 0.14 & 10.9 & 6.17 &1656.8\\
             E06 & 5 & 7 & 0.17 & 0.34 (U) & 0.08 & 14.6 & 6.58 &1496.4\\
             E07 & 10 & 2 & 0.18 & 0.62 (U) & 0.16 & 21.8  & 3.75 &1126.1 \\
             E08 & 6 & 6 & 0.65 & 2.50 (S) & 0.67 & 34.2  & 8.75 & 3036.4 \\
             E09 & 9 & 3 & 19.76 & 171.63 (S) & 48.42 & 175.8 & 12.00 & 20302.4 \\
	     E10 &  9  &  3  &   0.22      &   0.46  (S)  &  0.14  &  15.1   &  4.75   & 1550.3 \\
	     E11 &   7 &  5  &    0.15     &   0.38 (S)   &  0.10  &  17.3   &  7.25   & 1339.6 \\
	     E12 &  7  &  5  &   0.15      &   0.38 (S)    & 0.10   &  16.4   &  6.92   & 1326.7 \\
	      E13 &  8  &  4  &    0.65     &  2.39 (U) &  0.80  &  47.4   &  7.67   & 2764.8  \\
	     E14 &  8  &  4  &   0.62      &   2.24  (U)    &  0.77  &   46.5  & 7.67    & 2724.0 \\
	     E15 &  7  &  5  &   0.37      &   0.76 (U)    &  0.21  &  32.3   &  9.5   & 2283.4 \\
	      E16 &  8  & 4   &   0.96      &    5.47  (U)   &  1.43  &  38.3   & 11.0    & 5151.8 \\
	       E17 &  7  & 5   &   0.08       &    0.13 (U)   &  0.02  &  11.3   &  8.67   & 768.5 \\
	        E18 &  6   &  6  &   0.11      &    0.19 (U)  &  0.04  & 9.0    &  5.33   & 1212.6 \\
	        E19 &  6  &  6  &   0.37      &   0.74 (U)    &   0.15 &  31.3     &   8.17  & 2416.2 \\
	        E20 &  6  &  6  &   0.11      &   0.17 (U)   & 0.03   &  10.9  &  5.5   & 1026.5 \\
	        E21 &  7  &  5   &   0.40      &    1.38 (S)   &  0.33  &  30.8   &  13.25   & 2509.6 \\
	        E22 &  9  &  3  &   8.15      &   53.92 (U)    & 17.11  &  134.3   & 8.17    & 9736.3 \\
	        E23 &  8  & 4   &    0.99     &  3.73 (S)     &  1.48  & 35.3    &   7.09  & 2995.9 \\
	        E24 &   7 &  5  &   0.11      &  0.13 (U)    &  0.02  &   9.5  &  5.42   & 1051.5 \\
	        E25 &  6  &  6  &    4.52     &   24.74 (S)    &  9.43  &  27.1   &  4.83   & 7486.7 \\
	        E26 &  6  &  6  &     4.97    &    17.79 (S)   &  7.55  &  28.2   &   4.92  & 4811.1  \\
	        E27 &  7  & 5   &   4.59      &   20.81 (S)    & 7.42   &  26.9   &   4.91  & 4633.7 \\
	        E28 &  7  &    5  &     0.23    &  0.46 (U)    & 0.08  &  9.1   &   6.17  &  2740.0  \\
	        E29 & 6   &  6  &   0.08      & 0.42 (S)    & 0.11   & 11.3    &   5.42  &  648.8 \\
	        E30 &  6  &   6 &   0.31      &   0.79 (U)    &  0.20  &  26.4   &  5.42   & 2316.2 \\
	        E31 &  8  & 4   &    0.38    &    0.71 (U)   &  0.18  & 17.9    &   8.58  & 3367.0 \\
	        E32 &  9  &  3  &   2.48      &   8.49 (U)    &  2.41  & 97.3    &  17.75   & 9231.3 \\
	        E33 &   9 &  3  &   1.27      &   4.24  (S)    &   1.16 &   66.7  &   16.83  & 6637.8 \\
	        E+ & 22   &  11  &    7.39     &   98.27 (S)    & 23.55   &  75.7   &  9.15  & 5612.5 \\
             %
            \\\bottomrule
        \end{tabular}
        \caption{Experimental results for safety properties}\label{tab:exp_extended}
        \end{tiny}
\end{table}

We provide two tables describing the benchmark run in \mcmt: Table~\ref{tab:benchmark} lists the name of tested examples and gives relevant information on the size of the input specification files, whereas Table~\ref{tab:exp_extended} summarizes the experimental results obtained running \mcmt over those files. For each example, Table~\ref{tab:benchmark} reports:
\begin{inparaenum}[\it (i)]
\item the number of memory components (\textbf{\#AC}); 
\item the number of memory variables (\textbf{\#PV}); 
\item the number of transitions (\textbf{\#T}), also counting those that contain quantified data variables (\textbf{\#Q}), and those that manipulate at least one index (\textbf{\#In});
\item whether the example contains arithmetics (\textbf{Ar)}) and universal guards (\textbf{UGuards)}).
\end{inparaenum}

Table~\ref{tab:exp_extended} shows several measures\footnote{A measure is ``average for an example'' when it is the mean of that measure for all properties tested on that example.}: 
\begin{inparaenum}[\it (i)]
\item number of \unsafe (column \textbf{\#U}) and \safe (\textbf{\#S}) outcomes;
\item seconds for the average (\textbf{MeanT}) and maximum (\textbf{MaxT}) \mcmt execution time, together with the standard deviation (\textbf{StDvT});
\item indications of the size of the backward state space explored by \mcmt, in terms of average number of nodes (\textbf{Avg\#(N)}), average depth of the search tree (\textbf{AvgD}) and of number of calls to the external SMT solver (\textbf{Avg\#(calls)}).
\end{inparaenum}

Considering only the 26 plus 1 models supporting the specific features of RABs, we notice from Table~\ref{tab:exp_extended} that the means of the execution times are relatively small: as one can see from the  {\tt RAB-benchmark-time-log.txt} file in URL, \mcmt terminates in less than one second for $85.2\%$ of the tested files (294 out of 345), and in less than one minute for 343 out of 345. The two files over one minute take $90.36$ and $98.27$ seconds, respectively. 
Considering all the 33  plus 1 models, \mcmt terminates in less than one second for $85.3\%$ of the tested files (366 out of 429): the highest timing is 171.63 seconds.

When the outcome is unsafe, \mcmt returns a witness from which it is possible to symbolically reconstruct the unsafe trace as a formula. This formula falls into a decidable fragment (the exist-forall one), where we can concretely test whether the unsafe trace is spurious or not. This method can be, in principle, fully automatized (even if, currently, \mcmt does not support it).

\section{Conclusions}
\label{sec:conclusions}
We have presented the \uras framework for modeling rich relational dynamic systems equipped with arithmetics and universal guards, developing techniques for safety verification and invariant-based analysis. Our results do not only have a foundational importance, but also witness that separately implemented verification modules in the MCMT model checker, respectively dealing with static relational data, and with approximation for arithmetics and universally quantified variables, can be indeed gracefully combined to obtain a safety verification procedure for \uras{s}. The experimental evaluation here reported also shows that this is an effective, practical approach.
%
In the paper, we have provided a list of concrete models of relational dynamic systems that require the advanced features of \uras{s}, in the context of multiagent systems \cite{FeGM21} and extensions of Petri nets \cite{Fahland19,PWOB19,BPM20}. The next step is then to fully encode such concrete models into \uras{s}, also studying which relevant universal invariants can be established as a by-product of the encoding itself.

%
%


\clearpage

\bibliographystyle{named}
\bibliography{ijcai22}

\begin{thebibliography}{}

\bibitem[\protect\citeauthoryear{Abiteboul \bgroup \em et al.\egroup
  }{1995}]{FoundationsOfDB}
S.~Abiteboul, R.~Hull, and V.~Vianu.
\newblock {\em Foundations of Databases}.
\newblock Addison-Wesley, 1995.

\bibitem[\protect\citeauthoryear{Alberti \bgroup \em et al.\egroup
  }{2012}]{jsat}
F.~Alberti, S.~Ghilardi, E.~Pagani, S.~Ranise, and G.~P. Rossi.
\newblock Universal guards, relativization of quantifiers, and failure models
  in {Model Checking Modulo Theories}.
\newblock {\em Journal on Satisfiability, Boolean Modeling and Computation},
  8(1/2):29--61, 2012.

\bibitem[\protect\citeauthoryear{Bagheri~Hariri \bgroup \em et al.\egroup
  }{2013}]{BCDDM13}
B.~Bagheri~Hariri, D.~Calvanese, G.~De~Giacomo, A.~Deutsch, and M.~Montali.
\newblock Verification of relational data-centric dynamic systems with external
  services.
\newblock In {\em Proceedings of {PODS} 2013}, pages 163--174, 2013.

\bibitem[\protect\citeauthoryear{Baral and De~Giacomo}{2015}]{BarD15}
C.~Baral and G.~De~Giacomo.
\newblock Knowledge representation and reasoning: What's hot.
\newblock In {\em Proceedings of {AAAI} 2015}. {AAAI} Press, 2015.

\bibitem[\protect\citeauthoryear{Barrett \bgroup \em et al.\egroup
  }{2018}]{smt-lib}
C.~Barrett, P.~Fontaine, and C.~Tinelli.
\newblock {The SMT-LIB Standard: Version 2.6}.
\newblock Technical report, Available at:
  \url{http://smtlib.cs.uiowa.edu/language.shtml}, 2018.

\bibitem[\protect\citeauthoryear{Belardinelli \bgroup \em et al.\egroup
  }{2014}]{BeLP14}
F.~Belardinelli, A.~Lomuscio, and F.~Patrizi.
\newblock Verification of agent-based artifact systems.
\newblock {\em Journal of Artificial Intelligence Research}, 51:333--376, 2014.

\bibitem[\protect\citeauthoryear{Belardinelli}{2014}]{Belardinelli14}
Francesco Belardinelli.
\newblock Model checking auctions as artifact systems: Decidability via finite
  abstraction.
\newblock In {\em Proceedings of {ECAI} 2014}, pages 81--86, 2014.

\bibitem[\protect\citeauthoryear{Calvanese \bgroup \em et al.\egroup
  }{2013}]{CaDM13}
D.~Calvanese, G.~De~Giacomo, and M.~Montali.
\newblock Foundations of data-aware process analysis: {A} database theory
  perspective.
\newblock In {\em Proceedings of {PODS} 2013}, pages 1--12, 2013.

\bibitem[\protect\citeauthoryear{Calvanese \bgroup \em et al.\egroup
  }{2018}]{CDMP18}
D.~Calvanese, G.~De~Giacomo, M.~Montali, and F.~Patrizi.
\newblock First-order \emph{{\(\mu\)}}-calculus over generic transition systems
  and applications to the situation calculus.
\newblock {\em Information and Computation}, 259(3):328--347, 2018.

\bibitem[\protect\citeauthoryear{Calvanese \bgroup \em et al.\egroup
  }{2019a}]{arxivbeth}
D.~Calvanese, S.~Ghilardi, A.~Gianola, M.~Montali, and A.~Rivkin.
\newblock Combined covers and beth definability (extended version).
\newblock {\em CoRR}, abs/1911.07774, 2019.

\bibitem[\protect\citeauthoryear{Calvanese \bgroup \em et al.\egroup
  }{2019b}]{BPM19}
D.~Calvanese, S.~Ghilardi, A.~Gianola, M.~Montali, and A.~Rivkin.
\newblock Formal modeling and {SMT}-based parameterized verification of
  data-aware {BPMN}.
\newblock In {\em {Proceeding of {BPM} 2019}}, volume 11675 of {\em LNCS},
  pages 157--175. Springer, 2019.

\bibitem[\protect\citeauthoryear{Calvanese \bgroup \em et al.\egroup
  }{2020a}]{IJCAR20}
D.~Calvanese, S.~Ghilardi, A.~Gianola, M.~Montali, and A.~Rivkin.
\newblock {Combined Covers and Beth Definability}.
\newblock In {\em Proceedings of {IJCAR} 2020}, volume 12166 of {\em LNCS
  (LNAI)}, pages 181--200. Springer, 2020.

\bibitem[\protect\citeauthoryear{Calvanese \bgroup \em et al.\egroup
  }{2020b}]{MSCS20}
D.~Calvanese, S.~Ghilardi, A.~Gianola, M.~Montali, and A.~Rivkin.
\newblock {SMT}-based verification of data-aware processes: a model-theoretic
  approach.
\newblock {\em Mathematical Structures in Computer Science}, 30(3):271--313,
  2020.

\bibitem[\protect\citeauthoryear{Calvanese \bgroup \em et al.\egroup
  }{2021}]{JAR21}
D.~Calvanese, S.~Ghilardi, A.~Gianola, M.~Montali, and A.~Rivkin.
\newblock Model completeness, uniform interpolants and superposition calculus.
\newblock {\em Journal of Automated Reasoning}, 65(7):941--969, 2021.

\bibitem[\protect\citeauthoryear{Calvanese \bgroup \em et al.\egroup
  }{2022a}]{JAR22}
D.~Calvanese, S.~Ghilardi, A.~Gianola, M.~Montali, and A.~Rivkin.
\newblock Combination of uniform interpolants via {Beth} definability.
\newblock {\em Journal of Automated Reasoning}, 66(3):409--435, 2022.

\bibitem[\protect\citeauthoryear{Calvanese \bgroup \em et al.\egroup
  }{2022b}]{CalvaneseGMP22}
Diego Calvanese, Giuseppe~De Giacomo, Marco Montali, and Fabio Patrizi.
\newblock Verification and monitoring for first-order {LTL} with
  persistence-preserving quantification over finite and infinite traces.
\newblock In {\em Proceedings of {IJCAI} 2022}, pages 2553--2560, 2022.

\bibitem[\protect\citeauthoryear{De~Giacomo \bgroup \em et al.\egroup
  }{2016}]{DeLP16}
G.~De~Giacomo, Y.~Lesp{\'e}rance, and F.~Patrizi.
\newblock Bounded situation calculus action theories.
\newblock {\em Artificial Intelligence}, 237, 2016.

\bibitem[\protect\citeauthoryear{Deutsch \bgroup \em et al.\egroup
  }{2016}]{DeLV16}
A.~Deutsch, Y.~Li, and V.~Vianu.
\newblock Verification of hierarchical artifact systems.
\newblock In {\em Proceedings of {PODS} 2016}, pages 179--194, 2016.

\bibitem[\protect\citeauthoryear{Deutsch \bgroup \em et al.\egroup
  }{2018}]{DHLV18}
A.~Deutsch, R.~Hull, Y.~Li, and V.~Vianu.
\newblock Automatic verification of database-centric systems.
\newblock {\em {ACM} {SIGLOG} News}, 5(2):37--56, 2018.

\bibitem[\protect\citeauthoryear{Deutsch \bgroup \em et al.\egroup
  }{2019}]{DeLV19}
A.~Deutsch, Y.~Li, and V.~Vianu.
\newblock Verification of hierarchical artifact systems.
\newblock {\em {ACM} Transactions on Database Systems}, 44(3):12:1--12:68,
  2019.

\bibitem[\protect\citeauthoryear{Fahland}{2019}]{Fahland19}
D.~Fahland.
\newblock Describing behavior of processes with many-to-many interactions.
\newblock In {\em Proceedings of {PETRI} {NETS} 2019}, volume 11522 of {\em
  LNCS}, pages 3--24. Springer, 2019.

\bibitem[\protect\citeauthoryear{Feldman \bgroup \em et al.\egroup
  }{2019}]{shahom}
Y.~M.~Y. Feldman, O.~Padon, N.~Immerman, M.~Sagiv, and S.~Shoham.
\newblock Bounded quantifier instantiation for checking inductive invariants.
\newblock {\em Logical Methods in Computer Science}, 15(3), 2019.

\bibitem[\protect\citeauthoryear{Felli \bgroup \em et al.\egroup
  }{2020}]{PRIMA2020}
P.~Felli, A.~Gianola, and M.~Montali.
\newblock A {SMT}-based implementation for safety checking of parameterized
  multi-agent systems.
\newblock In {\em Proceedings of {PRIMA} 2020}, volume 12568 of {\em {LNCS}},
  pages 259--280. Springer, 2020.

\bibitem[\protect\citeauthoryear{Felli \bgroup \em et al.\egroup
  }{2021}]{FeGM21}
P.~Felli, A.~Gianola, and M.~Montali.
\newblock {SMT}-based safety checking of parameterized multi-agent systems.
\newblock In {\em Proceedings of {AAAI} 2021}. {AAAI} Press, 2021.

\bibitem[\protect\citeauthoryear{Felli \bgroup \em et al.\egroup
  }{2022}]{FelliMW22}
Paolo Felli, Marco Montali, and Sarah Winkler.
\newblock Linear-time verification of data-aware dynamic systems with
  arithmetic.
\newblock In {\em Proceedings of {AAAI} 2022}, pages 5642--5650. {AAAI} Press,
  2022.

\bibitem[\protect\citeauthoryear{Gerevini \bgroup \em et al.\egroup
  }{2008}]{GereviniSS08}
Alfonso Gerevini, Alessandro Saetti, and Ivan Serina.
\newblock An approach to efficient planning with numerical fluents and
  multi-criteria plan quality.
\newblock {\em Artif. Intell.}, 172(8-9):899--944, 2008.

\bibitem[\protect\citeauthoryear{Ghilardi and Ranise}{2010a}]{lmcs}
S.~Ghilardi and S.~Ranise.
\newblock Backward reachability of array-based systems by {SMT} solving:
  Termination and invariant synthesis.
\newblock {\em Logical Methods in Computer Science}, 6(4), 2010.

\bibitem[\protect\citeauthoryear{Ghilardi and Ranise}{2010b}]{mcmt}
S.~Ghilardi and S.~Ranise.
\newblock {MCMT:} {A} model checker modulo theories.
\newblock In {\em Proceedings of {IJCAR} 2010}, volume 6173 of {\em LNCS
  (LNAI)}, pages 22--29. Springer, 2010.

\bibitem[\protect\citeauthoryear{Ghilardi \bgroup \em et al.\egroup
  }{2020}]{BPM20}
S.~Ghilardi, A.~Gianola, M.~Montali, and A.~Rivkin.
\newblock Petri nets with parameterised data - modelling and verification.
\newblock In {\em Proceedings of {BPM} 2020}, volume 12168 of {\em LNCS}, pages
  55--74. Springer, 2020.

\bibitem[\protect\citeauthoryear{Ghilardi \bgroup \em et al.\egroup
  }{2022a}]{IS22}
S.~Ghilardi, A.~Gianola, M.~Montali, and A.~Rivkin.
\newblock Petri net-based object-centric processes with read-only data.
\newblock {\em Information Systems}, 107, 2022.

\bibitem[\protect\citeauthoryear{Ghilardi \bgroup \em et al.\egroup
  }{2022b}]{benchmark-rabs}
S.~Ghilardi, A.~Gianola, M.~Montali, and A.~Rivkin.
\newblock Rabs benchmark.
\newblock Technical report, Available at:
  \url{https://github.com/AlessandroGianola/SMT-based-Data-Aware-Processes-Verification/tree/RAB-benchmark-in-MCMT},
  2022.

\bibitem[\protect\citeauthoryear{Hariri \bgroup \em et al.\egroup
  }{2013}]{HaririCMGMF13}
Babak~Bagheri Hariri, Diego Calvanese, Marco Montali, Giuseppe~De Giacomo,
  Riccardo~De Masellis, and Paolo Felli.
\newblock Description logic knowledge and action bases.
\newblock {\em J. Artif. Intell. Res.}, 46:651--686, 2013.

\bibitem[\protect\citeauthoryear{Hariri \bgroup \em et al.\egroup
  }{2014}]{HaririCD14}
Babak~Bagheri Hariri, Diego Calvanese, Marco Montali, and Alin Deutsch.
\newblock State-boundedness in data-aware dynamic systems.
\newblock In {\em Proceedings of {KR} 2014}. {AAAI} Press, 2014.

\bibitem[\protect\citeauthoryear{Hull}{2008}]{Hull08}
R.~Hull.
\newblock Artifact-centric business process models: {B}rief survey of research
  results and challenges.
\newblock In {\em Proceedings of OTM 2008}, volume 5332 of {\em LNCS}, pages
  1152--1163. Springer, 2008.

\bibitem[\protect\citeauthoryear{Karbyshev \bgroup \em et al.\egroup
  }{2015}]{bjorner}
A.~Karbyshev, N.~Bj{\o}rner, S.~Itzhaky, N.~Rinetzky, and S.~Shoham.
\newblock Property-directed inference of universal invariants or proving their
  absence.
\newblock In {\em Proceedings of {CAV} 2015}, volume 9206 of {\em LNCS}, pages
  583--602. Springer, 2015.

\bibitem[\protect\citeauthoryear{Karbyshev \bgroup \em et al.\egroup
  }{2017}]{bjorner17}
A.~Karbyshev, N.~Bj{\o}rner, S.~Itzhaky, N.~Rinetzky, and S.~Shoham.
\newblock Property-directed inference of universal invariants or proving their
  absence.
\newblock {\em Journal of the {ACM}}, 64(1):7:1--7:33, 2017.

\bibitem[\protect\citeauthoryear{Kouvaros and Lomuscio}{2016}]{KouvarosAIJ16}
P.~Kouvaros and A.~Lomuscio.
\newblock Parameterised verification for multi-agent systems.
\newblock {\em Artificial Intelligence}, 234:152--189, 2016.

\bibitem[\protect\citeauthoryear{Li \bgroup \em et al.\egroup }{2017}]{verifas}
Y.~Li, A.~Deutsch, and V.~Vianu.
\newblock {VERIFAS:} {A} practical verifier for artifact systems.
\newblock {\em Proceedings of the {VLDB} Endowment}, 11(3):283--296, 2017.

\bibitem[\protect\citeauthoryear{Polyvyanyy \bgroup \em et al.\egroup
  }{2019a}]{PolyvyanyyWOB19}
A.~Polyvyanyy, J.~M. E.~M. van~der Werf, S.~Overbeek, and R.~Brouwers.
\newblock Information systems modeling: Language, verification, and tool
  support.
\newblock In {\em Proceedings of {CAiSE} 2019}, volume 11483 of {\em LNCS},
  pages 194--212. Springer, 2019.

\bibitem[\protect\citeauthoryear{Polyvyanyy \bgroup \em et al.\egroup
  }{2019b}]{PWOB19}
A.~Polyvyanyy, J.~M. E.~M. van~der Werf, S.~Overbeek, and R.~Brouwers.
\newblock Information systems modeling: Language, verification, and tool
  support.
\newblock In {\em Proceedings of CAiSE 2019}, volume 11483 of {\em LNCS}, pages
  194--212. Springer, 2019.

\bibitem[\protect\citeauthoryear{Vianu}{2009}]{Vian09}
V.~Vianu.
\newblock Automatic verification of database-driven systems: a new frontier.
\newblock In {\em Proceedings of {ICDT} 2009}, pages 1--13, 2009.

\bibitem[\protect\citeauthoryear{Yang \bgroup \em et al.\egroup
  }{2022}]{Yang-ML2022}
Wen-Chi Yang, Jean-Fran{\c c}ois Raskin, and Luc~De Raedt.
\newblock Lifted model checking for relational mdps.
\newblock {\em Machine Learning}, 2022.

\end{thebibliography}

\clearpage
\appendix


\section{Appendix}
In this appendix, we provide full proofs of all the results of the paper. 

\subsection{Results from Section~\ref{sec:safety-uras}}
It can be easily seen that Algorithm~\ref{alg1} returns a \safe outcome iff a fixpoint is reached, i.e., for some natural $M$, $\T\models \ApproxB_M \leftrightarrow \ApproxB_{M+1}$ and $\iota\land \ApproxB_{M}$ is $\T$-unsatisfiable.

From the previous lemma, it follows the following proposition.

\begin{lemma}\label{safety-preserv}
\begin{inparaenum}[(i)]
\item For all $k>0$, $\T\models B_k\rightarrow \ApproxB_k$. 
\item Moreover, let us suppose that for some $M$ 
$\T\models \ApproxB_M \leftrightarrow \ApproxB_{M+1}$. Then, for all $k>M$, $\T\models B_k\rightarrow \ApproxB_M$.
\end{inparaenum}

\end{lemma}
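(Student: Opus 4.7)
The plan is to prove part (i) by induction on $k$, where $B_k$ denotes the exact backward-reachable set after $k$ iterations (namely $B_0 = \upsilon$ and $B_{k+1} = B_k \vee \mathit{Pre}(\tau, B_k)$), and $\ApproxB_k$ denotes the accumulated formula computed by $\breach$ after $k$ iterations of its main loop. The two key ingredients I would invoke are the facts, already established in the paper, that both approximation operators over-approximate the exact preimage, i.e.\ $\T \models \mathit{Pre}(\tau, \psi) \rightarrow \InstPre(\tau, \psi)$ (because instantiating a universal guard on finitely many witnesses yields a weaker formula) and $\T \models \exists \ud\, \varphi \rightarrow \mathsf{Covers}(\T, \exists \ud\, \varphi)$ (because the cover is the strongest quantifier-free $\T$-consequence). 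By disjunctive monotonicity, these combine to $\T \models \mathit{Pre}(\tau, \psi) \rightarrow \mathsf{Covers}(\T, \InstPre(\tau, \psi))$ for every state formula $\psi$.

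For the base case $k=1$, both $B_1$ and $\ApproxB_1$ collapse to $\upsilon$, so the implication is trivial. For the inductive step, assume $\T \models B_k \rightarrow \ApproxB_k$. Unfolding definitions, $B_{k+1} \equiv B_k \vee \mathit{Pre}(\tau, B_k)$, and I would bound the two disjuncts separately: the first by the inductive hypothesis, and the second by combining monotonicity of $\mathit{Pre}$ (w.r.t.\ implication on state formulae) with the inductive hypothesis and the over-approximation facts above, yielding $\T \models \mathit{Pre}(\tau, B_k) \rightarrow \mathsf{Covers}(\T, \InstPre(\tau, \ApproxB_k))$. Since the right-hand side is one of the disjuncts forming $\ApproxB_{k+1}$, the conclusion $\T \models B_{k+1} \rightarrow \ApproxB_{k+1}$ follows.

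For part (ii), I would first observe, by an easy secondary induction on $k \geq M$, that the fixpoint hypothesis $\T \models \ApproxB_M \leftrightarrow \ApproxB_{M+1}$ propagates: since the update rule produces $\ApproxB_{k+1}$ purely as a function of $\ApproxB_k$ (via the two approximation operators and a disjunction with $\ApproxB_k$), equality at one step forces equality at the next, giving $\T \models \ApproxB_k \leftrightarrow \ApproxB_M$ for all $k \geq M$. Combining this with part (i), for every $k > M$ we obtain $\T \models B_k \rightarrow \ApproxB_k \leftrightarrow \ApproxB_M$, as required.

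The main obstacle I anticipate is not the logical structure of the induction, which is routine, but ensuring that the monotonicity of the two approximation operators is carefully justified in the presence of existentially quantified data variables $\ud$ and existentially quantified index variables $\ue_1$ introduced inside $\InstPre$: one must check that if $\psi \rightarrow \psi'$ as state formulae, then $\InstPre(\tau, \psi) \rightarrow \InstPre(\tau, \psi')$ and $\mathsf{Covers}(\T, \psi) \rightarrow \mathsf{Covers}(\T, \psi')$ modulo $\T$. The first follows because instantiation and substitution preserve implication; the second is the standard fact that covers are monotone, being the strongest quantifier-free $\T$-consequence. Once these monotonicity lemmas are in place, both parts of the statement follow by the induction outlined above.
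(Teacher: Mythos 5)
Part (i) of your proposal is correct and is essentially the paper's own argument: induction on $k$, using only the monotonicity of the \emph{exact} preimage $\mathit{Pre}$ together with the fact that $\T\models \mathit{Pre}(\tau,\psi)\rightarrow \mathsf{Covers}(\T,\InstPre(\tau,\psi))$, applied at $\psi=\ApproxB_k$. Note that for this part the monotonicity of $\InstPre$ and $\mathsf{Covers}$ that you flag as the ``main obstacle'' is never actually needed: your own chain $\mathit{Pre}(\tau,B_k)\rightarrow\mathit{Pre}(\tau,\ApproxB_k)\rightarrow\mathsf{Covers}(\T,\InstPre(\tau,\ApproxB_k))$ only uses monotonicity of the exact operator, which is trivially semantic.

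Part (ii) is where your route diverges from the paper's and where there is a real gap. You propose to show that the fixpoint propagates, i.e.\ $\T\models\ApproxB_k\leftrightarrow\ApproxB_M$ for all $k\geq M$, on the grounds that $\ApproxB_{k+1}$ is ``a function of $\ApproxB_k$'' and that both approximation operators are monotone modulo $\T$. The hypothesis $\T\models\ApproxB_M\leftrightarrow\ApproxB_{M+1}$ is a \emph{semantic} equivalence, so to push it through the update you need $\InstPre$ to respect semantic $\T$-implication of state formulae. This is not justified by ``instantiation and substitution preserve implication'': $\InstPre(\tau,\psi)$ instantiates the universal guard precisely on the existential index variables occurring in the \emph{syntactic} prefix of $\psi$, so two $\T$-equivalent but syntactically different state formulae give rise to instantiated guards over different sets of witnesses, and since instantiation only \emph{weakens} the guard there is no a priori reason why the implication transfers (any proof of this would need a Herbrand-style argument about where the witnesses for the implied formula can live, which is far from the routine fact you invoke). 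The paper avoids the issue entirely: to get (ii) one inducts on $k\geq M$ on the statement $\T\models B_k\rightarrow\ApproxB_M$ directly, with base case given by part (i) and step $B_{k+1}=B_k\vee\mathit{Pre}(\tau,B_k)\rightarrow\ApproxB_M\vee\mathit{Pre}(\tau,\ApproxB_M)\rightarrow\ApproxB_{M+1}\leftrightarrow\ApproxB_M$, using again only monotonicity of the exact preimage, the over-approximation fact at level $M$, and the fixpoint hypothesis once per step. You should replace your argument for (ii) with this one.
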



\begin{proof}
We first prove the following \textbf{fact}: for all $k>0$, $\T\models\ApproxB_k \lor \mathit{Pre}(\tau,\ApproxB_{k})\rightarrow \ApproxB_{k+1}$. Indeed, by definition of the operators $\InstPre$ and $\mathsf{Covers_{\text{\uras}}}$, which are over-approximations of $\mathit{Pre}$,  we get that $\T\models\mathit{Pre}(\tau,\psi)\rightarrow \ApproxPre(\T,\tau,\psi)$, for every formula $\psi$. Hence, we obtain that $\T\models\ApproxB_k \lor \mathit{Pre}(\tau,\ApproxB_{k})\rightarrow \ApproxB_k \lor  \ApproxPre(\T,\tau,\ApproxB_k)$, but  $\ApproxB_k \lor\ApproxPre(\T,\tau,\ApproxB_k)\equiv \ApproxB_{k+1}$, which proves the \textbf{fact}.

We now prove point (i), i.e. for all $k>0$, $\T\models B_k\rightarrow \ApproxB_k$. The base case ($k=1$) is trivial.
By inductive hypothesis, suppose that $\T\models B_k\rightarrow \ApproxB_k$.
 Since the operator $\mathit{Pre}$ is monotone, we get, by using twice the inductive hypothesis, that $\T\models B_k\lor\mathit{Pre}(\tau,B_k)\rightarrow \ApproxB_k \lor \mathit{Pre}(\tau,\ApproxB_k)$, which means, by definition of $B_{k+1}$, that $\T\models B_{k+1}\rightarrow \ApproxB_k \lor \mathit{Pre}(\tau,\ApproxB_k)$. Using the \textbf{fact}, we deduce that 
 $\T\models B_{k+1}\rightarrow \ApproxB_{k+1}$, as wanted. Point (ii) of the lemma immediately follows.
\end{proof}

\vskip 2mm\noindent
\textbf{Proposition~\ref{prop:safe}} \emph{
If Algorithm~\ref{alg1} returns \safe when applied to the safety problem $(\cS,\upsilon)$, then the \uras $\cS$ is safe w.r.t. $\upsilon$.
}
\vskip 1mm
\begin{proof}
If Algorithm~\ref{alg1} returns \safe, a fixpoint has been reached, i.e. for some natural $M$, $\T\models \ApproxB_M \leftrightarrow \ApproxB_{M+1}$ and $\iota\land \ApproxB_{M}$ is $\T$-unsatisfiable.
By Lemma~\ref{safety-preserv}, we know that for $k>M$, $\T\models B_k\rightarrow \ApproxB_M$, which implies that also $\iota\land \B_{k}$ is $\T$-unsatisfiable for every $k>M$: indeed, if $\iota\land \B_{k'}$ were $\T$-satisfiable for some $k'>M$, by $\T\models B_k'\rightarrow \ApproxB_M$ we would get that also $\iota\land \ApproxB_{M}$ is $T$-satisfiable, contradiction. Thus, by definition of the safety problem and of $\B_k$, it immediately follows that $\cS$ is safe w.r.t. to $\upsilon$.
\end{proof}

When $\cS$ is unsafe w.r.t.$\upsilon$, notice that Algorithm~\ref{alg1} cannot return a \safe outcome, otherwise $\cS$ would be unsafe w.r.t $\upsilon$. More precisely, Algorithm~\ref{alg1} is also complete: 

\begin{proposition}\label{prop:unsafe}
 If $\cS$ is unsafe w.r.t $\upsilon$, Algorithm~\ref{alg1} terminates and returns an \unsafe outcome.
\end{proposition}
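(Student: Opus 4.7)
The plan is to establish completeness by a semantic-to-syntactic translation: from a concrete unsafe trace we will extract an iteration index at which Algorithm~\ref{alg1} is forced to expose satisfiability of $\iota\land P$ and hence return \unsafe.

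First I would unfold the hypothesis that $\cS$ is unsafe w.r.t.\ $\upsilon$ into its defining witness: there exist a natural number $k$, a DB-instance, and an assignment making the trace formula~\eqref{eq:smc1} true. This is equivalent to the $T$-satisfiability of $\iota\wedge \mathit{Pre}^k(\tau,\upsilon)$. Since the exact preimage cascade satisfies $B_k \equiv \bigvee_{j=0}^{k} \mathit{Pre}^j(\tau,\upsilon)$ (as already used inside the proof of Lemma~\ref{safety-preserv}), I conclude $\iota\wedge B_k$ is $T$-satisfiable.

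Second I would lift this to the approximated cascade computed by \breach. By Lemma~\ref{safety-preserv}(i), $T\models B_k\to \ApproxB_k$, so $\iota\wedge \ApproxB_k$ is $T$-satisfiable. By construction $\ApproxB_k = \bigvee_{i<k} P_i$, hence there exists a least index $j^{\star}<k$ for which $\iota\wedge P_{j^{\star}}$ is $T$-satisfiable. This $j^{\star}$ will be the target iteration.

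Third I would argue that the algorithm must actually reach iteration $j^{\star}$ (or terminate with \unsafe earlier, which already suffices). The only alternative is early exit of the while loop at some $M\leq j^{\star}$, returning \safe. But then Proposition~\ref{prop:safe} would give that $\cS$ is safe w.r.t.\ $\upsilon$, contradicting our hypothesis. At iteration $j^{\star}$ itself the guard $P_{j^{\star}}\wedge\neg \ApproxB_{j^{\star}}$ must be $T$-satisfiable, because otherwise the loop would exit and we would reach the same contradiction via Proposition~\ref{prop:safe}. Inside the loop body, the check at Line~3 with $\iota\wedge P_{j^{\star}}$ succeeds by choice of $j^{\star}$, and the procedure returns \unsafe together with a symbolic witness trace of the form~\eqref{eq:smc1}.

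The main subtlety is not the argument itself but termination. Nothing in the procedure \emph{a priori} prevents an infinite run; what saves us is that $j^{\star}$ is finite and every earlier possible exit (either \unsafe at some $n<j^{\star}$, or \safe) is compatible with the required conclusion—\safe being ruled out by Proposition~\ref{prop:safe}, and \unsafe being exactly what we want. Combined with the fact that each iteration of the loop is effectively computable (the cover and instantiation operators are effective, and the $T$-satisfiability tests in Lines~2--3 are decidable, as already observed for Theorem~\ref{thm:partial-sound}), this yields termination with \unsafe within at most $j^{\star}+1$ iterations.
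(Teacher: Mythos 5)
Your proof is correct and follows essentially the same route as the paper's: unfold unsafety into $T$-satisfiability of $\iota\wedge B_k$, lift to $\iota\wedge\ApproxB_k$ via Lemma~\ref{safety-preserv}(i), and conclude that the test in Line~3 must fire. The only difference is that you make explicit a step the paper leaves implicit—namely that the loop cannot exit early with \safe (ruled out by Proposition~\ref{prop:safe}) and therefore actually reaches the first iteration $j^{\star}$ at which $\iota\wedge P_{j^{\star}}$ is satisfiable—which is a worthwhile clarification but not a different argument.
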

\begin{proof} 
If $\cS$ is unsafe w.r.t $\upsilon$, then by definition of the safety problem, there exists a $k$ s.t.  $\iota\land B_k$ is $T$-satisfiable. By point (i) of Lemma~\ref{safety-preserv}, we know that $\T\models B_k\rightarrow \ApproxB_k$, hence $\iota\land \ApproxB_k$ is $T$-satisfiable as well. This is precisely the condition of the $\textbf{if}$ in Line~3: thus, we conclude that Algorithm~\ref{alg1}  terminates with an \unsafe outcome.
\end{proof}

Putting together the previous results, we can prove the following theorem.

\vskip 2mm\noindent
\textbf{Theorem~\ref{thm:partial-sound}}\emph{
\breach is  \emph{effective}, \emph{partially sound} and \emph{complete} when verifying unsafety of \uras{s}.
}
\vskip 1 mm
\begin{proof}
The fact that the procedure is partially sound and complete follows from Propositions~\ref{prop:safe} and \ref{prop:unsafe}. Notice also that the procedure is effective since: \begin{inparaenum} [(i)]
\item $T$-covers exist and can be effectively computed using the results from~\cite{JAR21,JAR22} 
\item all the $T$-satisfiability tests are decidable according to the results from~\cite{MSCS20}.
\end{inparaenum}
\end{proof}

If we restrict our attention on \uras{s} that do not contain universal guards, we can prove an even stronger result.

\vskip 2mm\noindent
\textbf{Theorem~\ref{thm:semi-decision}}\emph{
\breach is \emph{effective}, \emph{sound} and \emph{complete} (and hence a semi-decision procedure) when verifying safety of \uras{s} without universal guards.
}
\vskip 1 mm
\begin{proof}
From Theorem~\ref{thm:partial-sound}, the only thing that remains to prove is that Algorithm~\ref{alg1} is fully sound if the \uras does not contain universal guards. In this case, $\InstPre\equiv Pre$, hence Algorithm~\ref{alg1} coincides with the SMT-based backward reachability procedure for RASs introduced in~\cite{MSCS20}.
Notice, however, that \uras{s} without universal guards still strictly extend plain \ras{s} from \cite{MSCS20}, since $\DB$ are strictly more expressive than DB schemas of \ras{s}: $\DB$ schema contain arithmetic operations and are constrained by theory combinations where one component is an arithmetical theory. Nevertheless, using the results and the procedures presented in \cite{JAR21,arxivbeth} for computing covers, and given the equivalence between computing covers and eliminating quantifiers in model completions from~\cite{MSCS20}, in order to conclude one can use the analogous arguments based on model completions as the ones used in~\cite{MSCS20} for proving full soundness of RASs.
\end{proof}

\subsection{Results from Section~\ref{sec:inva}}
The following result is useful since it shows that Algorithm~\ref{alg1} can be used to find safety universal invariants: indeed, when the outcome is \safe, we can extract a safety universal invariant.

\begin{proposition}
If a Algorithm~\ref{alg1} returns $(\safe, \ApproxB)$, then $\neg\ApproxB$ is a safety universal invariant.
\end{proposition}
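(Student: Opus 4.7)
My approach is to verify, in turn, the three defining properties of a safety universal invariant as in Definition~\ref{def:invariants}, after first pinning down the syntactic shape of $\ApproxB$ at termination.

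First, I would unwind the main loop of Algorithm~\ref{alg1}. Writing $P_0 := \upsilon$ and $P_{n+1} := \mathsf{Covers}(T, \InstPre(\tau, P_n))$, a simple induction on the loop iteration shows that if \breach exits on the $(N{+}1)$-th iteration returning $(\safe, \ApproxB)$, then $\ApproxB$ is equivalent (modulo $T$) to $P_0 \lor P_1 \lor \cdots \lor P_{N-1}$, while the failure of the loop guard yields $T \models P_N \rightarrow \ApproxB$. Because each $P_i$ is a state formula, hence existential over index variables, $\ApproxB$ is existential, and $\neg \ApproxB$ has the universal shape $\forall \ui\, \psi(\ui,\ux,\ua)$ required by Definition~\ref{def:invariants}.

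Next come the two easy conditions. For (c), $\upsilon = P_0$ is literally a disjunct of $\ApproxB$, so $T \models \upsilon \rightarrow \ApproxB$, and therefore $\upsilon \land \neg \ApproxB$ is $T$-unsatisfiable. For (a), in every iteration before termination Line~3 witnessed that $\iota \land P_i$ is $T$-unsatisfiable for $i = 0, \ldots, N-1$ (otherwise \breach would have returned \unsafe); taking the disjunction over $i$, $\iota \land \ApproxB$ is $T$-unsatisfiable, i.e.\ $T \models \iota \rightarrow \neg \ApproxB$.

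The main step, and where the argument requires the most care, is the inductiveness condition (b). The key input is that both $\InstPre$ and $\mathsf{Covers}$ are $T$-implied over-approximations of the exact preimage operator $\mathit{Pre}$, so that $T \models \mathit{Pre}(\tau, P_i) \rightarrow P_{i+1}$ holds for every $i$. Distributing $\mathit{Pre}(\tau,\cdot)$ over the disjunction defining $\ApproxB$ gives
\[
T \models \mathit{Pre}(\tau, \ApproxB) \;\rightarrow\; P_1 \lor \cdots \lor P_N.
\]
Since $P_1, \ldots, P_{N-1}$ are already disjuncts of $\ApproxB$ and the fixpoint test provides $T \models P_N \rightarrow \ApproxB$, we conclude $T \models \mathit{Pre}(\tau, \ApproxB) \rightarrow \ApproxB$. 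Unfolding the definition of $\mathit{Pre}$ and contrapositing yields exactly
\[
T \models \neg \ApproxB(\ux,\ua) \land \tau(\ux,\ua,\ux',\ua') \;\rightarrow\; \neg \ApproxB(\ux',\ua'),
\]
which is condition (b). Combining the three conditions, $\neg \ApproxB$ is a safety universal invariant, as claimed. The only subtlety is making sure the over-approximation lemma $T \models \mathit{Pre}(\tau, P_i) \rightarrow P_{i+1}$ is invoked uniformly across all disjuncts and correctly glued with the fixpoint equivalence supplied by the terminating loop test; everything else is bookkeeping on the structure of $\ApproxB$.
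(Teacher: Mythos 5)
Your proof is correct and follows essentially the same route as the paper's: unwind the loop to get $\ApproxB \equiv P_0 \lor \cdots \lor P_{N-1}$ with the exit test giving $T \models P_N \rightarrow \ApproxB$, read off conditions (a) and (c) from the failed Line-3 tests and the presence of $\upsilon = P_0$ as a disjunct, and establish (b) by reducing it to $T \models \mathit{Pre}(\tau,\ApproxB) \rightarrow \ApproxB$ via the over-approximation property of $\InstPre$ and $\mathsf{Covers}$ together with the fixpoint. No gaps.
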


\begin{proof}
Clearly, by construction, $\ApproxB$ is an existential formula over artifact sorts only, hence $\neg\ApproxB$ is universal. We show that conditions (a),(b),(c) from Definition~\ref{def:invariants} hold.
Let us suppose that we obtained the $\safe$ outcome after $n$ iterations of the main loop: at this point, we have that $\ApproxB\equiv P_1\lor\dots\lor P_{n-1}$.
\noindent \textbf{Condition (a)}: since Algorithm~\ref{alg1} does not return \unsafe, in every iteration of the main loop the condition for the \textbf{if} construct is not satisfied: hence, for every $i=0,...,n-1$, $T\models \iota \rightarrow \neg P_i$, which means that $T\models \iota \rightarrow \bigwedge^{n-1}_{i=0}\neg P_i$, i.e., $T\models \neg \ApproxB$, as wanted.

\noindent \textbf{Condition (b)}: we need to prove that $T\models \neg\ApproxB(\ux,\ua)\land \tau(\ux,\ua,\ux',\ua')\rightarrow \neg\ApproxB(\ux',\ua')$. By trivial logical manipulations, we deduce that the thesis is equivalent to
$T\models \ApproxB(\ux,\ua)\lor \neg \exists \ux',\ua' (\tau(\ux,\ua,\ux',\ua')\land \ApproxB(\ux',\ua'))$, which means $T\models  \exists \ux',\ua' (\tau(\ux,\ua,\ux',\ua')\land \ApproxB(\ux',\ua'))\rightarrow \ApproxB(\ux,\ua)$. By definition of exact preimage, we equivalently get:
$$T\models  Pre(\tau,\ApproxB)\rightarrow \ApproxB$$
Hence, we need to prove that $T\models  Pre(\tau,\ApproxB)\rightarrow \ApproxB$. Let $\cA$ be a  $\DB$ instance, and let us suppose that $\cA\models Pre(\tau,\ApproxB)(\ux,\ua)$ for every pair $(\ux,\ua)$. 

By construction of $\ApproxB$, 
we then get that $\cA\models \bigvee^{n-1}_{i=0} Pre(\tau, P_i)(\ux,\ua)$ for every pair $(\ux,\ua)$. By definition of $\ApproxPre$ (which is an over-approximation of $Pre$), we know that $T\models Pre(\tau, P_i)(\ux,\ua)\rightarrow \ApproxPre(T,\tau,P_i)$ for all $i=0,...,n-1$, hence $\cA\models \bigvee^{n-1}_{i=0} \ApproxPre(T, \tau, P_i)(\ux,\ua)$  for every pair $(\ux,\ua)$. Thus, since $P_{i+1}:= \ApproxPre(T, \tau, P_i)$, we get:
$$\cA\models \bigvee^{n}_{i=1} P_{i}(\ux,\ua) $$
By hypothesis, we have that at the $n$-th iteration, since $\ApproxB\equiv \ApproxB_{n}$, $T\models P_{n}\rightarrow \ApproxB$, hence we conclude that
$$\cA\models  \bigvee^{n-1}_{i=0} P_{i}(\ux,\ua) \equiv \ApproxB(\ux,\ua),  $$
as wanted.



\noindent \textbf{Condition (c)}: by definition of $\ApproxB$, we clearly have that $T\models P_0 \rightarrow \neg \ApproxB$, which means $T\models \upsilon \rightarrow \neg \ApproxB$.
\end{proof}

The following theorem is instrumental to prove the main result of the section.

\vskip 2mm\noindent
\textbf{Theorem~\ref{thm:inv}} \emph{
If a safety universal invariant $\phi$  for a \uras $\cS$ w.r.t. $\upsilon$ exists, then is such that $P_n\rightarrow \neg\phi$
}
\vskip 1mm
\begin{proof}
We prove the statement by induction on the number $n$ of iterations of the main loop in Algorithm~\ref{alg1}.

\noindent\textbf{Base case.} In case $n=0$, let $P_0$ be the value of the variable $P$  in Line~1 of  Algorithm~\ref{alg1}: $P_0$ coincides with the unsafety formula $\upsilon$.  Since $\phi$ is by hypothesis an invariant, condition (b) in the definition of invariant implies that $T\models\upsilon\rightarrow \neg \phi$, which means that $T\models P_0\rightarrow \neg \phi$, as wanted.

\noindent\textbf{Inductive case.} Let $P_n$ the value of the variable $P$ at the $n$-th iteration of the main loop. By inductive hypothesis, we have that $T\models P_n\rightarrow \neg \phi$. We show that $T\models P_{n+1}\rightarrow \neg \phi$ holds as well.
First, we notice that, by condition (c) of Definition~\ref{def:invariants}, the fact that $\phi$ is invariant implies that $T\models \tau(\ux,\ua,\ux',\ua')\land \neg\phi(\ux',\ua')\rightarrow \neg \phi(\ux,\ua)$ holds. The previous implication can be rewritten as 
\begin{equation}\label{eq:inv1}
 T\models \mathit{Pre}(\tau,\neg\phi)\rightarrow \neg \phi
 \end{equation}
 since, by definition of preimage, we have $T\models \tau(\ux,\ua,\ux',\ua')\land \neg\phi(\ux',\ua')\equiv \mathit{Pre}(\tau,\neg\phi)$.

It easily follows from the definition of preimage the fact that the preimage is monotonic, i.e., given two formulae $\alpha$ and $\beta$, $ \alpha\rightarrow \beta$ implies $\mathit{Pre}(\tau,\alpha)\rightarrow\mathit{Pre}(\tau,\beta)$. Hence, from the inductive hypothesis $T\models P_n\rightarrow \neg \phi$, we get  
\begin{equation}\label{eq:inv2}
T\models  \mathit{Pre}(\tau,P_n)\rightarrow \mathit{Pre}(\tau,\neg\phi).
\end{equation}

From Formulae~\eqref{eq:inv2} and~\eqref{eq:inv1} it follows  
\begin{equation}\label{eq:inv3}
T\models \mathit{Pre}(\tau,P_n)\rightarrow \neg\phi.
\end{equation}

It can be easily seen that if $P_n$ is an existential formula, then so is also $\InstPre(\tau,P_n)$. Since $\upsilon$ is an existential formula over indexes, then we conclude by construction of $\InstPre$ and by induction that $ \InstPre(\tau,P_n)$  is an existential formula over artifact sorts (indexes) and basic sorts. Moreover, by applying the further approximation $\mathsf{Covers_{\text{\uras}}}$, we eliminate the existentially quantified variables over basic sorts, so as to get an existential formula over indexes only. Hence, since $P_{n+1}\equiv \mathsf{Covers_{\text{\uras}}}(T,\InstPre(\tau,P_n))$, let us assume that $P_{n+1}$ has the form $\exists \hat{\ue} \, \psi(\hat\ue)$, where $\psi$ is quantifier-free. Thus, in order to conclude the proof, we need to prove:
\begin{equation}\label{eq:inv4}
T\models \exists\hat{\ue} \, \psi(\hat\ue)\rightarrow \neg\phi.
\end{equation}
By reduction to absurdum, let us suppose that there exists a $DB$ instance $\cA:=(\cA_{art}, \cA_{DB})$, where $\cA_{art}$ is the $(\ext{\Sigma}\setminus \Sigma)$-reduct of $\cA$ and $\cA_{DB}$ is the $\Sigma$-reduct of $\cA$, such that $\cA\not\models \exists\hat{\ue} \, \psi(\hat\ue)\rightarrow \neg\phi$, i.e. 
$$ \cA\models  \exists\hat{\ue} \, \psi(\hat\ue)    \mbox{ and }  \cA\models \phi $$
Let $\cB_{DB}$ be a $\Sigma$-structure (which is model of $T$) that extends $\cA_{DB}$ s.t. $\cB_{DB}\models \InstPre(\tau,P_n)$: $\cB_{DB}$ exists since  $\mathsf{Covers_{\text{\uras}}}$ computes $T$-covers and thanks to the Covers-by-Extensions Lemma \cite{JAR21}. Notice that in the $\ext{\Sigma}$-structure $\B_1:=(\cA_{art},\cB_{DB})$ the formula $\phi$ still holds (its  $(\ext{\Sigma}\setminus \Sigma)$-reduct is by definition the same as the one of $\cA$).

Now, we take the
 $\ext{\Sigma}\setminus \Sigma$-substructure $\cB_{art}$ of $\cB_1$ generated by the evaluation  in $\cA_{art}$ of the variables $\hat\ue$ from $\hat{\ue} \, \psi(\hat\ue)$. Let $\cB_2:=(\cB_{art},\cB_{DB})$, which is again a model of $T$. Since validity of universal formulae is preserved when passing to substructures and $\phi$ is universal, we have that $\cB_2\models \phi$.
By construction, we still have that $\cB_2\models\InstPre(\tau,P_n)$. This implies also that $\cB_2\models \mathit{Pre}(\tau,P_n)$: indeed, all the universal guards  $\forall k \gamma_u(k, \ue,\ud, \ux,\ua)$ appearing in $\tau$ are satisfied iff the corresponding instantiated conjunctions  $\bigwedge_{k\in \ue} \gamma_u(k,\ue, \ud, \ux, \ua) $ appearing in $\InstPre(\tau,P_n)$ are satisfied, since, by construction of $\cB_2$, the instantiation of the universal quantifiers $\forall k$ in $\tau$ covers all the elements in the support of $\cB_{art}$. However,  $\cB_2\models \mathit{Pre}(\tau,P_n)$ implies that there exists a DB instance, which is $\cB_2$, such that $\cB_2\models \mathit{Pre}(\tau,P_n)\land\phi$: this contradicts entailment~\eqref{eq:inv3}, and concludes the proof.
\end{proof}

\begin{remark}
Notice  that the argument of the previous proof works because in our context it is always possible to extend a structure wrt its $\Sigma$-reduct and at the same time to restrict the given structure  wrt its $\ext{\Sigma}\setminus\Sigma$-reduct, and these `opposite' constructions does not interfere each other. Indeed, the only link between the interpretations of artifact sorts and the ones of basic sorts are the artifact components, which are free functions from `indexes' to the proper $\DB$ instance and which can always be restricted in their domain or extended in their co-domain. This would not be possible anymore in case there were function or relation symbols that can interfere each other in more sophisticated way.
\end{remark}

The following is the main result of the section.
\vskip 2mm\noindent
\textbf{Corollary~\ref{cor:inv}} \emph{
If there exists a safety universal invariant $\phi$ a \uras
$\cSi$, then Algorithm~\ref{alg1} cannot terminate with the \unsafe outcome. 
}
\vskip 1mm
\begin{proof}
By Theorem~\ref{thm:inv}, we get that, for every iteration of the main loop, $\cC(DB)\models P_n\rightarrow\neg\phi$.
Since $\phi$ is a universal invariant, from point (a) of Definition~\ref{def:invariants}, we know also that $\cC(DB)\models \neg\phi \rightarrow\neg\iota$. Hence, we deduce $\cC(DB)\models P_n\rightarrow\neg\iota$. This implies that the condition of the satisfiability test of Line~3 of Algorithm~\ref{alg1} never holds, which means that Algorithm~\ref{alg1} cannot return \unsafe.
\end{proof}


\end{document}